\newcommand\DoToC{%
  \startcontents
\hypersetup{colorlinks=true, linkcolor=pierCite}
  \printcontents{}{1}{\subsection*{\textbf{Table of contents}}}
  \vskip3pt\vskip5pt
}
\author{Piersilvio De Bartolomeis$^{1-2}$}
\author{Javier Abad$^1$}
\author{Guanbo Wang$^{2-3}$}
 \author{Konstantin Donhauser$^1$}
\author{Raymond M. Duch$^5$}
\author{Fanny Yang$^1$}
\author{Issa J. Dahabreh$^{2-4}$}
\affil{\small{$^1$Department of Computer Science, ETH Zurich\\ $^2$CAUSALab, Harvard T.H. Chan School of Public Health \\
$^3$Department of Epidemiology, Harvard T.H. Chan School of Public Health\\
$^4$Department of Biostatistics, Harvard T.H. Chan School of Public Health\\$^5$Department of Politics and International Relations, University of Oxford}}
\title{Efficient Randomized Experiments Using Foundation Models}
\date{}
\begin{document}
\maketitle

\vspace{-3mm}
\begin{abstract}
Randomized experiments are the preferred approach for evaluating the effects of interventions, but they are costly and often yield estimates with substantial uncertainty. On the other hand, in silico experiments leveraging foundation models offer a cost-effective alternative that can potentially attain higher statistical precision. However, the benefits of in silico experiments come with a significant risk: statistical inferences are not valid if the models fail to accurately predict experimental responses to interventions.
In this paper, we propose a novel approach that integrates the predictions from multiple foundation models with experimental data while preserving valid statistical inference. Our estimator is consistent and asymptotically normal, with asymptotic variance no larger than the \emph{standard} estimator based on experimental data alone. Importantly, these statistical properties hold even when model predictions are arbitrarily biased. Empirical results across several randomized experiments show that our estimator offers substantial precision gains, equivalent to a reduction of up to 20\% in the sample size needed to match the same precision as the standard estimator based on experimental data alone\footnote{See our GitHub repository: \url{https://github.com/jaabmar/HAIPW}.}. 

\end{abstract}

\section{Introduction}
\begin{wrapfigure}{r}{0.38\textwidth}
\vspace{-11mm}
\centering
\includegraphics[scale=0.23]{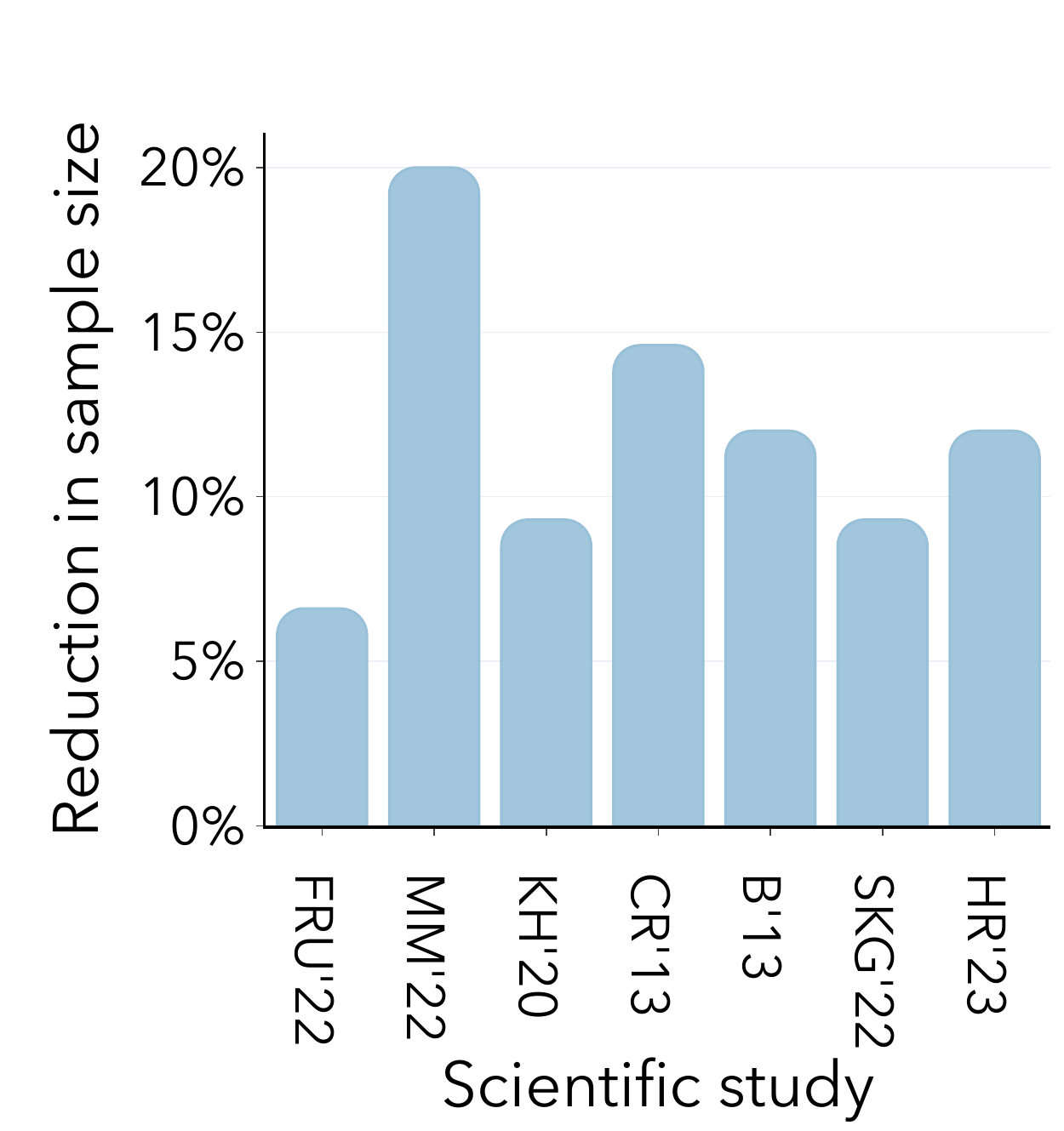} 
\caption{ 
\small{Our estimator achieves the same statistical precision as the standard estimator with up to 20\% fewer samples.  Each
study is subsampled to $n = 75$. We plot here the percentage reduction in the sample size needed to match the confidence interval width of the standard estimator using ours.}}
\label{fig:teaser}
\vspace{-5mm}
\end{wrapfigure}
Randomized experiments are widely considered the preferred approach for evaluating the effects of interventions in scientific research. However, obtaining sufficiently large sample sizes can be costly and time-consuming, especially when studying rare outcomes. For example, \citet{carlisle2015unsuccessful} reported that 481 out of 2579 recently completed clinical trials (19\%) failed due to insufficient patient recruitment to meet the required sample size. In cancer trials, this failure rate can be as high as 40\% due to strict eligibility and safety requirements~\citep{villacampa2024accrual}. As a result, there is growing interest in exploring in silico experiments as a potential alternative to randomized experiments. In silico experiments leverage the predictions from foundation models~\citep{bommasani2021opportunities}—machine learning models trained on massive datasets and applicable to many downstream tasks—to simulate the outcomes of hypothetical randomized experiments.  %
This approach has already shown promising results in replicating the results of randomized experiments in several scientific disciplines, including clinical research~\citep{gonzalez2023trialscope,dhawan2024end,dahabreh2025trial} and social sciences~\citep{argyle2023out,bail2024can,ashokkumar2024predicting}.

 \begin{figure*}
 \centering 
\includegraphics[width=0.94\textwidth]{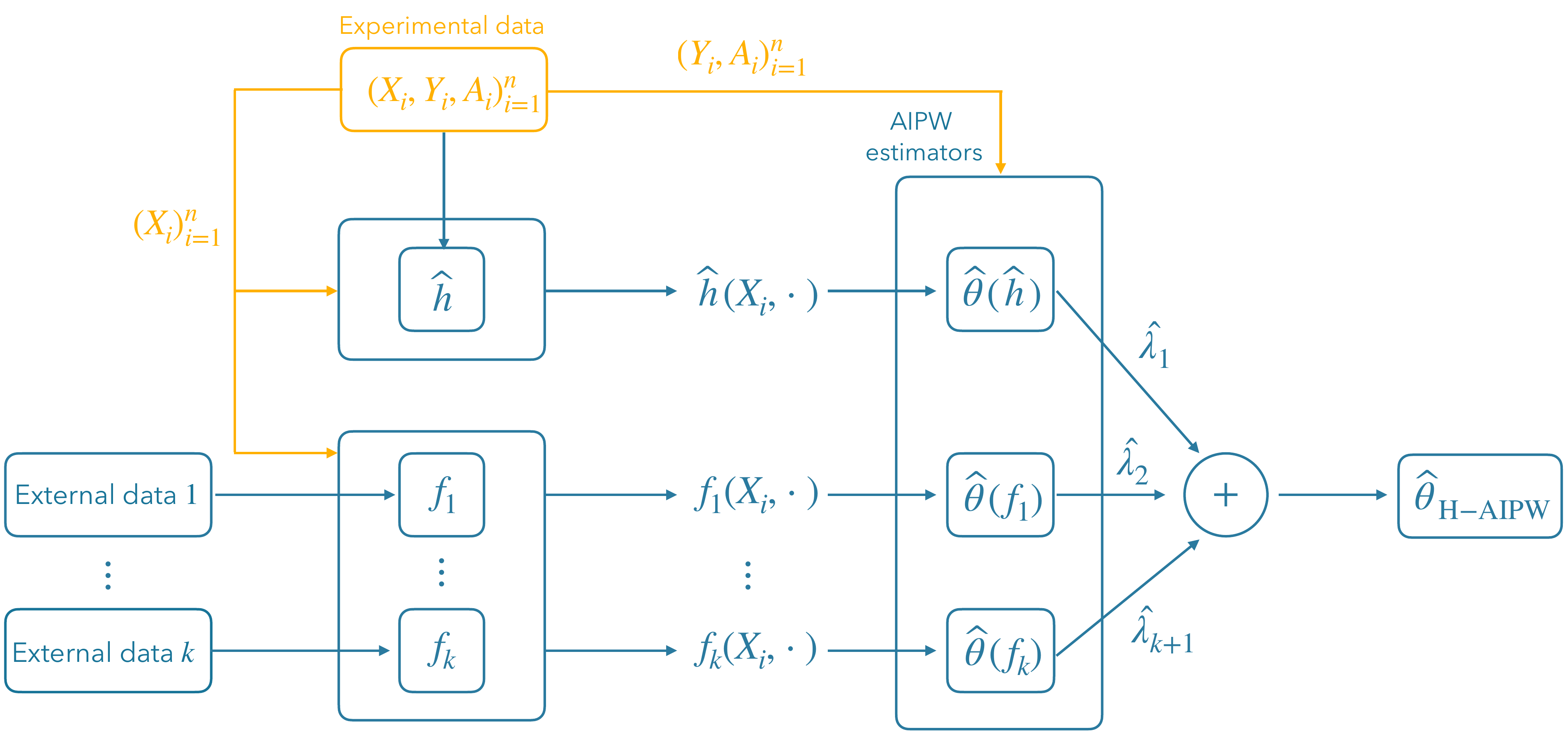} 
    \caption{\small{
Illustration of the Hybrid Augmented Inverse Probability Weighting ($\ours$) estimator. For each unit $i$ we observe covariates $X_i$, treatment $A_i$ and outcome $Y_i$; $(X_i,A_i,Y_i)_{i=1}^n$ forms the experimental data. An outcome regression model $\widehat h$ fitted to this sample yields the standard $\aipw$ estimate $\widehat\theta(\widehat h)$. In addition, foundation models trained on external data provide the candidate outcome regression models $f_1,\ldots, f_k$, which result in $k$ competing $\aipw$ estimates $\widehat\theta(f_1), \ldots,\widehat\theta(f_k) $.  By integrating the outcome regression models trained on a large external sample, rather than fitting a single model on the small experimental sample, $\ours$ can reduce the variance of the average treatment effect estimate.}
}
    \label{fig:setting} 
\end{figure*}
However, for a method to be adopted in safety-critical fields like medicine, valid statistical inference is an absolute requirement. For instance, the Food and Drug Administration guidance strongly recommends that any method aimed at improving the efficiency of randomized experiments should provide valid inference under minimal statistical assumptions~\citep{fda2021guide}. Yet, statistical inference from in silico experiments is not valid if model predictions fail to reflect experimental responses to interventions. Since such an assumption is difficult to falsify, the growing consensus among researchers is that results from in silico experiments should be limited to exploratory stages of research, for example, pilot studies to predict effect sizes in larger experiments~\citep{grossmann2023ai}.

This limitation raises an important question: Can we safely leverage the predictions from foundation models to improve efficiency while preserving valid statistical inference? In this paper, we introduce the \textbf{H}ybrid \textbf{A}ugmented \textbf{I}nverse \textbf{P}robability \textbf{W}eighting~(\ours), a novel estimator that can integrate predictions from multiple, potentially biased, foundation models while preserving valid statistical inference under minimal assumptions. Specifically, we prove that $\ours$ is consistent and asymptotically normal, with asymptotic variance no larger than the standard estimator based on experimental data alone. Importantly, our results require no additional assumptions beyond those necessary for estimating treatment effects in classical randomized experiments.
While our methodology applies broadly, we focus our empirical results on social science survey experiments, where large language models~(LLMs) can provide rich predictive signals. Across several randomized experiments, we show that $\ours$ can offer substantial precision gains, equivalent to a reduction of up to 20\% in the sample size required to achieve the same precision as the standard estimator based on experimental data alone~(see~\Cref{fig:teaser}).

\section{Related work}
Our work draws heavily from the literature on semiparametric inference and double machine learning~\citep{robins1994estimation,robins1995semiparametric,tsiatis2006semiparametric,chernozhukov2018double}. In particular, our estimator is an optimal combination of several Augmented Inverse Probability Weighting~(\aipw) estimators, whose outcome regression models are replaced with foundation models. Importantly, the standard $\aipw$ estimator, which relies on an outcome regression model
 estimated using experimental data alone, is also included in the combination.

\paragraph{Integrating foundation models}
Prediction-powered inference~(\ppi)~\citep{angelopoulos2023prediction} is a statistical framework that constructs valid confidence intervals using a small labeled dataset and a large unlabeled dataset imputed by a foundation model. $\ppi$ has been applied in various domains, including generalization of causal inferences~\citep{demirel24prediction,cadei2025causal}, large language model evaluation~\citep{fisch2024stratified,dorner2024limitsscalableevaluationfrontier}, and improving the efficiency of social science experiments~\citep{broskamixed,egami2024using}. Recent work by \citet{poulet2025prediction} introduces 
Prediction-powered inference for clinical trials ($\ppct$), an adaptation of $\ppi$ to estimate  average treatment effects in randomized experiments without any additional unlabeled data. $\ppct$ combines the difference in means estimator with an 
$\aipw$ estimator that uses the predictions from one foundation model for both treatment and control groups.
However, our work differs in a crucial aspect: $\ppct$ does not include the standard $\aipw$ estimator with the outcome regression model estimated from experimental data. Therefore, there is no mechanism to prevent $\ppct$ from having a higher variance compared to the standard $\aipw$ estimator that uses experimental data alone (see e.g.~\Cref{tab:detailed_results}). This risk of increased variance is a critical limitation in many settings—for example, in clinical trials, pharmaceutical sponsors are highly risk-averse and methods that carry even a small chance of underperforming the established standard face significant barriers to adoption. We refer the reader to~\Cref{apx:ppi} for a more complete discussion of the differences between our approach and $\ppi$.

\paragraph{Integrating observational data}  There is growing interest in augmenting randomized experiments with data from observational studies to improve statistical precision~\citep{lin2024data}. One approach involves first testing whether the observational data is compatible with the experimental data~\citep{dahabreh2024using,luedtke2019omnibus,hussain2023falsification,de2024detecting,de2023hidden}, and then combining the datasets to improve precision, if the test does not reject. These tests, however, have low statistical power, especially when the experimental sample size is small, which is precisely when leveraging observational data would be most beneficial. Another line of work combines a biased (but more precise) estimator from observational data with an unbiased estimator from experimental data to obtain a debiased estimate~\citep{kallus2018removing,dang2022cross,rosenman2023combining,van2024adaptive}. However, in small sample settings, 
the debiasing procedure often fails. 
Closest to ours, there are two lines of works that propose unbiased estimators: one integrates a prognostic score estimated from observational data as a covariate when estimating the outcome regression model~\citep{schuler2022increasing,liao2023prognostic}, while the other incorporates an outcome regression model estimated from observational data directly into the \aipw~estimator~\citep{gagnon2023precise,karlsson2024robust}. 
 However, both approaches rely on access to well-structured observational data to improve statistical precision. In contrast, our approach 
is not constrained by the availability of well-structured data, and instead leverages black-box foundation models trained on external data sources.

\section{Background on randomized experiments}
 We observe a dataset $\datarct$ of size $\nrct$ from a randomized experiment, containing tuples $(X,Y,A)$ of covariates $X \in \RR^\xdim$, bounded outcome $Y\in \RR$, and treatment variable $A \in \{0,1\}$. We assume that the data is drawn i.i.d. from  
 a joint distribution $\prct$ over $\left(X, Y(0), Y(1), Y,A\right)$, where the potential outcomes $\left(Y(0), Y(1)\right) \in \RR^2$ are unobserved and $Y = Y(A)$. 
Our goal is to use $\datarct$ to estimate the average treatment effect~(ATE) in the randomized experiment population, 
$$
\ate \defeq \EE[Y(1) - Y(0)],
$$ where the expectation is taken over $\prct$. 
In particular, we want to improve upon the statistical precision of classical ATE estimators (that ignore external data sources) by constructing an asymptotically valid confidence interval that is narrower. 
We further assume that the data is collected from a randomized experiment that 
satisfies the following standard assumptions.
\begin{assumption}[Identification assumptions]
\label{asm:internalvalid}
The data-generating process satisfies
 \begin{align*}
(i)&\;\;Y(a) \ind A,~\mathrm{for}~a =0,1. \\
(ii)& \;\; \pi_a = \prct(A =a) > 0,~\mathrm{for}~a =0,1.
 \end{align*}
\end{assumption} 
We assume that the propensity score $\pi_a$ is known by design, as is the case in the vast majority of experiments. Nevertheless,  our framework can be extended to allow for covariate-adaptive randomization or settings where the probability of treatment needs to be estimated.

Under~\Cref{asm:internalvalid}, we can identify the ATE as follows $$\ate = \EE[Y(1) - Y(0)] = \EE[Y\mid A=1]-\EE[Y\mid A=0].$$
Therefore, the standard approach is to estimate  $\theta$ using the difference in means  ($\dm$) estimator, 
$$
\estdm \defeq \frac{1}{n_1} \sum_{i:A_i=1} Y_i -\frac{1}{n_0} \sum_{i:A_i=0} Y_i,~~\text{where}~~n_a = |\{i: A_i=a\}|.
$$
This estimator is consistent and asymptotically normal (see e.g.~\citet[Theorem 1.2]{wager2024causal}): 
$$
\sqrt n (\estdm - \ate) \rightsquigarrow \gauss(0, \avardm),
$$
where $\rightsquigarrow$ denotes convergence in distribution and $\avardm$ is the asymptotic variance. Therefore, provided that we can obtain a consistent estimator of the asymptotic variance, $\avardmhat = \avardm + o_{\p}(1)$, we can construct an asymptotically valid confidence interval 
\begin{align}
\label{eq:ci}
\confdm^\alpha = \left(\estdm \pm z_{1-\frac{\alpha}{2}} \sqrt{\frac{\avardmhat}{n}}\right),
\end{align}
such that $\lim_{n \to \infty }\p (\ate \in \confdm^\alpha) \geq 1-\alpha$, where $z_\alpha$ is the $\alpha$-quantile of the standard normal distribution. Arguably, $\estdm$ is all that is
needed to estimate average treatment effects in randomized experiments. However, the variance $\avardmhat$ is often very large, leading to a wide confidence interval $\confdm^\alpha$. In the next section, we show that it is possible to obtain narrower confidence intervals by leveraging the information in the covariates.

\subsection{A class of valid estimators: Augmented Inverse Probability Weighting}
\label{sec:AIPW}
\citet{robins1994estimation} show that, when the propensity score is known, every regular and asymptotically linear estimator of $\ate$ is asymptotically equivalent to an \aipw~estimator of the form below:
\begin{equation*}
    \estaipw(h) \defeq \frac{1}{n} \sum_{i \in \datarct } \psi_i(h),
    \label{eq:aipw}
\end{equation*}
where $h:\RR^d \times \{0,1\} \to \RR$ is a square-integrable function, and 
$$
\psi_i(h) \defeq  \left(\frac{A_i}{\pi_1}(Y_i - h(X_i, 1)) + h(X_i,1)\right) - \left(\frac{1-A_i}{\pi_0}(Y_i - h(X_i,0)) + h(X_i,0)\right).
$$
The most efficient  estimator within this class 
uses an outcome regression model $h$ that minimizes the asymptotic variance. Specifically, the semiparametric efficiency lower bound is attained by choosing $h^\star(x,a) = \EE[Y|X=x, A=a]$, which corresponds to the 
conditional mean of the outcome.
In other words, the estimator $\estaipw(h^\star)$ attains the smallest asymptotic variance among all consistent and asymptotically normal estimators of $\ate$, and, thus, the smallest possible confidence interval in large samples. In practice, however, we only have an estimator $\widehat h$ of the conditional mean $h^\star$, which achieves the efficiency lower bound only if $||\widehat h - h^\star||_{L_2(\mathbb P)} = o_{\mathbb P}(1)$. 

Below, we adapt the standard result that establishes consistency and asymptotic normality of the \aipw~estimator to our setting, where the treatment probability 
 is known. The key distinction from the standard setting is that asymptotic normality is achieved as long as the outcome regression  model has an asymptotic limit. This implies that the confidence intervals are valid even when the conditional mean of the outcome is estimated using complex machine learning models with unknown convergence rates.
\begin{proposition}[Asymptotic behavior of \aipw] \label{prop:rootn} Let $\tilde{\mathcal D}$
be an auxiliary sample, independent of $ \mathcal D$. Let $\widehat h$ be the outcome regression
model trained on $\tilde{\mathcal D}$, and let $h^\dagger$ be a square-integrable limit such that
for $a\in\{0,1\}$,
\[
\big\|\widehat h(\cdot,a)-h^\dagger(\cdot,a)\big\|_{L^2(\mathbb P)}\ \xrightarrow{\ \mathbb P^\ast\ }\ 0 ,
\]
where $\mathbb P^\ast$ denotes the joint law of $(\mathcal D,\mathcal{\tilde D})$.
Then, it follows that $\estaipw(\widehat h)$ is asymptotically normal:
$$
 \sqrt n (\estaipw(\widehat h)  - \ate) \rightsquigarrow \gauss(0, V_{h^\dagger}),$$
 where $V_{h^\dagger}=\EE\left[\left(\psi(h^\dagger)  - \theta \right)^2\right]$ is the asymptotic variance. 
\end{proposition}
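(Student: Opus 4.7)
The plan is the standard oracle/remainder decomposition
\[
\sqrt n (\estaipw(\widehat h) - \theta) \;=\; \sqrt n (\estaipw(h^\dagger) - \theta) \;+\; \tfrac{1}{\sqrt n}\sum_{i=1}^n \bigl(\psi_i(\widehat h) - \psi_i(h^\dagger)\bigr),
\]
handle the first summand by the classical i.i.d.\ CLT, and show that the empirical-process remainder is $o_{\mathbb P^\ast}(1)$ by exploiting that $\widehat h$ is independent of $\mathcal D$ by construction.

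For the oracle term, I would first note that since $\pi_a$ does not depend on $x$, \Cref{asm:internalvalid} together with a constant known propensity implies $A \ind (X, Y(0), Y(1))$, and a direct computation then yields the key identity $\EE[\psi(h)] = \theta$ for \emph{every} square-integrable $h$ (not only for the true conditional mean). In particular $\EE[\psi(h^\dagger)] = \theta$, and boundedness of $Y$, $\pi_a > 0$, and square-integrability of $h^\dagger$ give $\EE[\psi(h^\dagger)^2] < \infty$. The i.i.d.\ CLT therefore delivers $\sqrt n (\estaipw(h^\dagger) - \theta) \rightsquigarrow \gauss(0, V_{h^\dagger})$.

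For the remainder, set $\Delta_i := \psi_i(\widehat h) - \psi_i(h^\dagger)$. Conditionally on $\tilde{\mathcal D}$, the function $\widehat h$ is deterministic and $\mathcal D$ remains i.i.d., so the $\Delta_i$ are i.i.d.\ with $\EE[\Delta_i \mid \tilde{\mathcal D}] = 0$ — this is precisely where the identity $\EE[\psi(h)] = \theta$ for every $h$ is crucial, as it annihilates the bias of $\Delta$ with no rate condition on $\widehat h$. Linearity of $\psi$ in $h$ together with the uniform bound $1/\min(\pi_0,\pi_1)$ on the IPW weights then yields
\[
\EE\bigl[\Delta^2 \mid \tilde{\mathcal D}\bigr] \;\le\; C \sum_{a\in\{0,1\}} \bigl\|\widehat h(\cdot,a) - h^\dagger(\cdot,a)\bigr\|_{L^2(\mathbb P)}^2 \;=\; o_{\mathbb P^\ast}(1).
\]
Conditional Chebyshev gives $\mathbb P^\ast\!\bigl(|n^{-1/2}\textstyle\sum_i \Delta_i| > \epsilon \mid \tilde{\mathcal D}\bigr) \le \EE[\Delta^2 \mid \tilde{\mathcal D}]/\epsilon^2 \to 0$ in $\mathbb P^\ast$-probability; since this bound is trivially at most one, dominated convergence upgrades it to unconditional convergence, so the remainder is $o_{\mathbb P^\ast}(1)$.

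Combining the two steps via Slutsky's theorem produces the claimed limit $\gauss(0, V_{h^\dagger})$. The main obstacle is the remainder step, and the reason no rate or Donsker-type entropy condition on $\widehat h$ is needed is precisely the identity $\EE[\psi(h)] = \theta$ valid for \emph{every} square-integrable $h$ when the propensity is known: the bias contribution of $\Delta$ vanishes exactly, leaving only a second-moment term controlled by the assumed $L^2$ convergence. Were the propensity estimated, this cancellation would fail and one would have to invoke cross-fitting together with product-rate conditions on the nuisances, as in standard double machine learning.
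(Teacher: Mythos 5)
Your proposal is correct and follows essentially the same route as the paper: the same oracle/remainder decomposition, the CLT for the term with $h^\dagger$, and a conditional (on the auxiliary sample) mean-zero plus Chebyshev argument for the remainder, exploiting that the known, covariate-independent propensity makes $\EE[\psi(h)]=\theta$ for every square-integrable $h$ — exactly the cancellation the paper uses via $\mathbb P(f-\widehat f)=0$. The only cosmetic differences are that you treat the remainder for both arms at once rather than splitting per arm, and you spell out the dominated-convergence step from conditional to unconditional convergence, which the paper leaves implicit.
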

We provide a proof of this result in~\Cref{apx:proofaipw}.
Proposition~\ref{prop:rootn} shows that the choice of estimator for the outcome regression does not affect the validity of the inference, provided that it is trained on an independent sample---for example, by using cross-fitting on the experimental sample or training the model on a larger external dataset.  Under these conditions, we can then construct an asymptotically valid confidence interval \(\confaipw^\alpha\) as outlined in~\Cref{eq:ci}.  
Further, the asymptotic variance critically depends on the limiting model $h^\dagger$, and decreases as $h^\dagger$ more closely approximates the true conditional mean $h^\star$ 
(see~\Cref{apx:excessvar} for a formal result on the dependency of the excess variance on the difference between the outcome regression model and $h^\star$).  

A standard way to obtain an estimate $\widehat{h}$ using the observed data
would be to output the minimizers of the empirical risks of each $a$:
\begin{align}
\label{eq:stdoutcome}
\widehat h(X,a) \in \underset{h \in \HH}{\arg\min}~\frac{1}{n_a} \sum_{i: A_i =a} \mathcal L(Y_i,h(X_i)),
\end{align}
where $\mathcal H$ is a chosen model class (e.g. all linear functions) and $\mathcal L$   a point-wise loss function (e.g. mean squared loss).
We refer to the empirical risk minimizer $\estaipw(\widehat h)$ in~\Cref{eq:stdoutcome} with $\mathcal H$ being the linear function class, as the \emph{standard} $\aipw$ estimator—as the name suggests, this is the most common estimator that is currently being deployed in practice. Hence, a key desideratum for any new estimator is \emph{safety} with respect to this standard baseline—that is, it should never perform substantially worse in terms of variance, and ideally perform better than the standard $\aipw$ estimator.
However, a key limitation of the standard $\aipw$ estimator is that its outcome regression model is trained on a small sample size and  is limited to a simple function class.  In the next section, we introduce a novel estimator that instead leverages predictions from foundation models trained on vast amount of external data, significantly improving our chances of learning an accurate outcome regression model.

\section{Methodology}
We introduce \textbf{H}ybrid \textbf{A}ugmented \textbf{I}nverse \textbf{P}robability \textbf{W}eighting (\ours), an estimator that, in contrast to the standard $\aipw$, leverages the predictions from multiple foundation models to improve statistical precision. In what follows, we first provide a formal definition of the \ours~estimator in~\Cref{algo:haipw}  and then give theoretical results for its asymptotic distribution and variance.

\subsection{Hybrid Augmented Inverse Probability Weighting}
With the recent widespread availability of foundation models, we can potentially improve the accuracy of the outcome regression model beyond what is obtained from~\Cref{eq:stdoutcome} simply by replacing it with a foundation model. This is the principle behind $\ppi$-style estimators, yet such an approach offers no safety guarantee of doing no worse than the standard estimator—a critical desideratum for adoption in many domains. Further, as is often the case with language models, multiple competing models may be available, with no clear way to determine the best choice for a given task in advance. Therefore, we propose combining multiple $\aipw$ estimators, each using a different outcome regression model: $$\estaipw(\widehat h), \estaipw(f_1),\ldots, \estaipw(f_k).$$ Here,  $\widehat h$ is estimated exclusively from experimental data, as shown in~\Cref{eq:stdoutcome}, while $f_1,\ldots, f_k$ are foundation models trained on independent external data. The challenge of selecting an optimal estimator from a set of competing estimators for the same target quantity has been extensively studied in the statistical literature;  see e.g. \citet{lavancier2016general}.
A common solution is to consider a weighted average of the available estimators, which in our setting corresponds to 
$$
\esthaipw_\lambda \defeq \lambda_{1} \estaipw(\widehat h) + \sum_{j=1}^k  \estaipw(f_j)\lambda_{j+1},~\text{for some}~\lambda \in \Lambda = \{\lambda \in \RR^{k+1}: \sum_{j=1}^{k+1} \lambda_j =1 \}.$$ 
We illustrate the estimation pipeline in~\Cref{fig:setting}. Further, we restrict the weights to the set $\Lambda$ so that the combined estimator $\esthaipw_\lambda$ is still in the class of $\aipw$ estimators. We can then choose the weights that minimize the asymptotic variance $V_\lambda$ of the combined estimator  $\esthaipw_\lambda$, that is:
\begin{equation*}
 \lambda^\star = \underset{\lambda \in \Lambda}{\arg\min}~V_\lambda = \underset{\lambda \in \Lambda}{\arg\min}~ \lambda^\top  \Sigma \lambda =  \Sigma^{-1} \mathbf 1 / (\mathbf 1^\top  \Sigma^{-1} \mathbf 1) , 
\end{equation*}
where $\Sigma \defeq 
    \operatorname{Cov}[ (\psi(h^\dagger), \ldots, \psi(f_k))^\top] $
    is the asymptotic covariance  and $h^\dagger$ is the asymptotic limit of $\widehat h$. However, in practice, we only have an estimate $\widehat \Sigma$ of the covariance matrix, and thus we use  
   $$\widehat \lambda := \underset{\lambda \in \Lambda}{\arg\min}~ \lambda^\top \widehat \Sigma \lambda.$$

\begin{algorithm*}[t]
\caption{\textbf{H}ybrid \textbf{A}ugmented \textbf{I}nverse \textbf{P}robability \textbf{W}eighting ($\ours$)}
\label{algo:haipw}
\begin{algorithmic}[1]
\REQUIRE (i) Dataset $\datarct = \{(X_i, A_i, Y_i)\}_{i=1}^n$. (ii) Collection of foundation models  $f_1,\ldots,  f_k$.  \\
(iii) Loss function $\mathcal{L}$ and function class $\mathcal{H}$. 
(iv) $\pi_a$ for $a=0,1$.
(v) Significance level $\alpha$. 
\STATE Use cross-fitting to compute the estimate $\estaipw(\widehat h)$ from the dataset $\datarct$, where for each arm  $a$:
$$
\widehat{h}(X,a) \in \arg\min_{h\in\mathcal{H}} \left\{\frac{1}{n_a}\sum_{i:A_i=a}\mathcal{L}(Y_i, h(X_i))\right\}.
$$
\STATE Compute  $\widehat \lambda =   \widehat \Sigma^{-1} \mathbf 1 / (\mathbf 1^\top \widehat \Sigma^{-1} \mathbf 1) $,
where  
$$
\widehat \Sigma \defeq  \frac{1}{n-1}\sum_{i=1}^n \left((\psi_i(\widehat{h}) , ..., \psi_i(f_k)) - \bar  \psi\right)^\top \left((\psi_i(\widehat{h}), ..., \psi_i(f_k)) - \bar  \psi \right),~~\text{and}~\bar{\psi} \defeq \frac{1}{n} \sum_{i=1}^n (\psi_i( \widehat{h}), \dots, \psi_i( f_k)). 
$$
\STATE Compute the estimate and its variance 
\begin{align}
\label{eq:haipw}
& \esthaipw_{\hat \lambda} \defeq \widehat \lambda_1 \estaipw(\widehat h) + \sum_{j=1}^{k} \estaipw(f_j)~ \widehat \lambda_{j+1},~~\text{and}~~\widehat V_{\hat \lambda} \defeq {\widehat \lambda}^\top~ \widehat\Sigma~ \widehat\lambda.
\end{align}
\STATE \textbf{Return:}  $
\CC^\alpha_{\ours} = \left( \esthaipw_{\hat \lambda} \pm z_{1-\frac{\alpha}{2}} \sqrt{\frac{\widehat V_{\hat \lambda}}{n}} \right)$, where $z_\alpha$ is the $\alpha$-quantile of the standard normal.
\end{algorithmic}
\end{algorithm*}

\paragraph{Asymptotic validity and efficiency}
We now establish that the \ours~estimator is consistent and asymptotically normal, with an asymptotic variance that is no greater than that of the standard \aipw. 
\begin{theorem}[Asymptotic behavior of \ours]
\label{thm:combine}
Let $\widehat h$ be an outcome regression model that satisfies the conditions in Proposition~\ref{prop:rootn}, with asymptotic limit $h^\dagger$. Further, let $\esthaipw_{\hat \lambda}$ be as in~\Cref{eq:haipw}, and assume that  $\Sigma$ is non-singular and $\widehat \Sigma\overset{p}{\to}\Sigma$. Then, it holds that 
$$
\sqrt n (\esthaipw_{\hat \lambda}   - \ate) \rightsquigarrow \gauss(0, V_{\lambda^\star}).
$$
Moreover, the asymptotic variance of the combined estimator is no greater than that of any individual estimator, i.e. it holds that 
$$
V_{\lambda^\star} \leq \Sigma_{jj}, ~\text{for}~j=1,\ldots,k+1.$$
\end{theorem}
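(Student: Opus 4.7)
The plan is to write $\esthaipw_{\hat\lambda}=\hat\lambda^\top \hat\Psi$ with $\hat\Psi:=(\estaipw(\hat h),\estaipw(f_1),\ldots,\estaipw(f_k))^\top$, establish joint asymptotic normality of $\hat\Psi$, show $\hat\lambda\overset{p}{\to}\lambda^\star$, and then apply Slutsky's theorem to read off the limiting distribution. The comparison $V_{\lambda^\star}\leq\Sigma_{jj}$ will then follow immediately from the minimization definition of $\lambda^\star$ over the simplex $\Lambda$.

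For the joint CLT, I would first linearize each component of $\hat\Psi$. For each fixed foundation model $f_j$, the summands $\psi_i(f_j)$ are i.i.d. with mean $\theta$ (under \Cref{asm:internalvalid}), so $\sqrt{n}(\estaipw(f_j)-\theta)=n^{-1/2}\sum_i(\psi_i(f_j)-\theta)$ exactly. For $\estaipw(\hat h)$, \Cref{prop:rootn} (and its proof in \Cref{apx:proofaipw}) yields the linearization $\sqrt{n}(\estaipw(\hat h)-\theta)=n^{-1/2}\sum_i(\psi_i(h^\dagger)-\theta)+o_{\mathbb P^\ast}(1)$, exploiting the independence of $\hat h$ from $\datarct$ together with the $L^2$-consistency of $\hat h$ for $h^\dagger$. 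Stacking these identities and applying the multivariate Lindeberg--L\'evy CLT together with Slutsky yields $\sqrt{n}(\hat\Psi-\theta\mathbf 1)\rightsquigarrow\gauss(0,\Sigma)$. Consistency of $\hat\lambda$ follows because $\Sigma$ is non-singular and $\hat\Sigma\overset{p}{\to}\Sigma$, so the continuous mapping theorem applied to $A\mapsto A^{-1}\mathbf 1/(\mathbf 1^\top A^{-1}\mathbf 1)$ gives $\hat\lambda\overset{p}{\to}\lambda^\star$. Finally, since every $\lambda\in\Lambda$ satisfies $\lambda^\top\mathbf 1=1$, we obtain the clean identity $\sqrt{n}(\esthaipw_{\hat\lambda}-\theta)=\hat\lambda^\top\sqrt{n}(\hat\Psi-\theta\mathbf 1)$, and Slutsky delivers the limit $\gauss(0,(\lambda^\star)^\top\Sigma\lambda^\star)=\gauss(0,V_{\lambda^\star})$.

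The variance bound is then essentially by inspection: each standard basis vector $e_j\in\RR^{k+1}$ lies in $\Lambda$, so by the optimality of $\lambda^\star$ we have $V_{\lambda^\star}=\min_{\lambda\in\Lambda}\lambda^\top\Sigma\lambda\leq e_j^\top\Sigma e_j=\Sigma_{jj}$ for every $j$. The only genuinely delicate step is the joint linearization, because $\estaipw(\hat h)$ uses a data-dependent outcome regression while the $\estaipw(f_j)$ use fixed external models, so a priori the components have different sources of stochasticity. Fortunately, \Cref{prop:rootn} already absorbs the $\hat h$-specific perturbation into an $o_{\mathbb P^\ast}(1)$ remainder, reducing the joint CLT to the standard multivariate CLT for the i.i.d. influence vectors $(\psi_i(h^\dagger),\psi_i(f_1),\ldots,\psi_i(f_k))$, whose covariance matrix is exactly $\Sigma$.
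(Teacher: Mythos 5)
Your proposal is correct and takes essentially the same route as the paper: Proposition~\ref{prop:rootn}'s linearization of $\estaipw(\widehat h)$ around $h^\dagger$, the i.i.d.\ influence-vector CLT with covariance $\Sigma$, continuous mapping to get $\widehat\lambda \overset{p}{\to} \lambda^\star$, Slutsky for the limit $\gauss(0, V_{\lambda^\star})$, and the observation that each basis vector $e_j$ lies in $\Lambda$ for the bound $V_{\lambda^\star}\leq \Sigma_{jj}$. The only presentational difference is that you make the joint asymptotic normality of the stacked vector of estimators explicit and then contract with $\widehat\lambda$, whereas the paper exploits linearity to view the oracle combination as a single $\aipw$ estimator with outcome model $\lambda^\star_1\widehat h+\sum_{j}\lambda^\star_{j+1}f_j$ and then argues asymptotic equivalence of $\esthaipw_{\hat\lambda}$ and $\esthaipw_{\lambda^\star}$; both reduce to the same underlying argument.
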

We provide a proof of this result in~\Cref{apx:proofhaipw}.
\Cref{thm:combine} offers a principled approach to combining multiple competing \aipw~estimators, ensuring that the resulting estimator is at least as precise (asymptotically) as the best estimator in the ensemble. In particular, this approach allows us to leverage the strengths of foundation models without any risks: when these models give accurate outcome predictions, the combined estimator uses their extra information to improve precision. On the other hand, when the foundation models are biased, the final estimator falls back to the standard $\aipw$.

\subsection{Step-by-step recipe with Large Language Models}
In this section, we provide a step-by-step guide for practitioners to implement $\ours$ using Large Language Models (LLMs). Our guide focuses on LLMs as they are both widely accessible and have demonstrated good accuracy in predicting human behavior \citep{grossmann2023ai}. As a concrete example, we present a political science experiment that evaluates the effect of free speech framings on opposition to cancel culture among Americans~\citep{fahey2023principled}. We provide simplified prompts here and refer readers to~\Cref{apx:prompts} for the full LLM prompts.

\begin{enumerate}
\item \textbf{Extract participant information.} Extract the tuples $Z_i = (X_i, Y_i, A_i)$ for each participant $i$ in the study. 
 In~\citet{fahey2023principled}, covariates include age, gender, ideology, income, and religion. The treatment represents a scenario where an Antifa protest is banned:
for safety reasons only ($A = 0$), or
 for safety reasons and cancel culture ($A = 1$).
The outcome is measured on a scale from 1 to 5, as the level of agreement with the statement: \emph{``Cancel culture is a big problem in today’s society."}

\item \textbf{Construct system prompts.} For each participant $i$, create a \emph{persona} that matches $X_i$ and guides the LLM in simulating  responses. In this study, personas summarize the participant's demographics. The  persona is then used as the \emph{system} prompt for the LLM (see \Cref{fig:example_system}).

\item \textbf{Construct user prompts.} The \emph{user} prompt includes the experimental treatment, the outcome question, and instructions to guide the LLM (see \Cref{fig:example_user}). We prompt the LLM to generate a synthetic outcome for both treatment and control. The final instruction is sampled from a predefined pool to introduce variability in the LLM's responses (see~ \Cref{apx:multiprompt}).

\item \textbf{Simulate 
 outcome responses.} Query the LLM using the user and system prompts. Validate that the responses are numeric and conform to the specified outcome scale. For experiments where multiple instructions are sampled, compute the average response.

\item \textbf{Estimate treatment effects.} Compute the confidence interval $\CC_{\ours}^\alpha$ via~\cref{algo:haipw}. Using cross-fitting to fit the outcome models is key for coverage in small-sample settings.

\end{enumerate}

\begin{figure*}[t!]
    \centering
    \begin{subfigure}{0.48\textwidth}
        \centering
        \begin{tcolorbox}[
            colframe=pierCite,
            colback=white,
            coltitle=white,
            title=Example System Prompt,
            fonttitle=\bfseries,
            boxrule=0.5mm,
            width=\linewidth
        ]\vspace{2.4mm}
        You are a 35-year-old female, politically Democrat, holding liberal views. Additionally, your religion is Christianity, and you once or twice a month attend religious services. You reside in a building with two or more apartments, and your household has a yearly income of \$85,000 to \$99,999. 
        
        You are responding to a scenario reflecting a debate involving college campus events and broader social issues.
       \vspace{2.4mm} \end{tcolorbox}
        \caption{}
        \label{fig:example_system}
    \end{subfigure}
    \hfill
    \begin{subfigure}{0.48\textwidth}
        \centering
        \begin{tcolorbox}[
            colframe=pierCite,
            colback=white,
            coltitle=white,
            title=Example User Prompt,
            fonttitle=\bfseries,
            boxrule=0.5mm,
            width=\linewidth
        ]
        \textbf{Treatment:}  \textit{A student organization denied Antifa's request for a rally, citing safety concerns due to altercations at similar events. } \\
        \textbf{Outcome question:} Do you agree or disagree with the statement: \\
        \emph{``Cancel culture is a big problem in today’s society.''} Choose an integer between 1 (strongly agree) and 5 (strongly disagree).\\
        \textbf{Instruction}: Reflect on the scenario and use your reasoning to assign a value. 
        \end{tcolorbox}
        \caption{}
        \label{fig:example_user}
    \end{subfigure}
\caption{\small{Examples of a system and user prompts used to generate synthetic responses for \citet{fahey2023principled}.}}

\end{figure*}

\section{Experiments}
In this section, we first show that \ours~improves statistical precision across eight randomized experiments without compromising empirical coverage.
We then evaluate the performance of several LLMs, highlighting the importance of both model scale and inference-time compute: larger models (e.g., GPT-4o and LLaMA 3 70B) consistently outperform smaller ones in prediction accuracy, and averaging over multiple prompts at inference time further improves performance.
\subsection{H-AIPW offers improved statistical precision}
\label{sec:main_experiments}

We evaluate $\ours$ across eight randomized experiments in Economics~\citep{haaland2023beliefs}, Psychology~\citep{brandt2013onset}, Political Science~\citep{fahey2023principled}, Foreign Policy \citep{silverman2022putting}, Sociology \citep{kennedy2020accidental,melin2022women,caprariello2013have,shuman2024defend}. These studies were selected from the multidisciplinary  Time-Sharing Experiments in the Social
Sciences (TESS) repository, along the lines of~\citet{ashokkumar2024predicting}.
For each experimental study $s$, we implement the following subsampling procedure: starting with a full dataset \(\mathcal{D}\) of size $N_s$, we select a target sample size $n$. For each subsampling repetition \(r \in \{1, \dots, R\}\), we sample \(n\) participants without replacement from \(\mathcal{D}\), ensuring the treatment and control groups are balanced, to create a smaller dataset \(\mathcal{D}_r\).

\begin{table*}[t!]
\centering
\caption{\small Performance comparison of \ours~against baseline estimators (\ppct, \dm, \aipw, \procova) across several randomized experiments. We randomly subsample each study
at sample sizes $n=100$ and  $n=200$. We report the variance of each estimator averaged over $R=10k$ subsampling repetitions. Cells shaded in \textcolor{NavyBlue}{blue} denote the standard $\aipw$  baseline that should be improved upon using external data; \textcolor{GreenYellow}{green} indicates better precision (lower variance) than standard $\aipw$; and \textcolor{RedOrange}{red} indicates worse precision (higher variance) than standard $\aipw$.}
\label{tab:detailed_results}
\small
\setlength{\tabcolsep}{3pt}
\renewcommand{\arraystretch}{1.1}
\begin{tabular}{lcccccccc}
\toprule
& \multicolumn{2}{c}{Melin et al.\,(2022)} 
& \multicolumn{2}{c}{Silverman et al.\,(2022)} 
& \multicolumn{2}{c}{Kennedy et al.\,(2020)} 
& \multicolumn{2}{c}{Fahey et al.\,(2023)} \\
\cmidrule(lr){2-3}\cmidrule(lr){4-5}\cmidrule(lr){6-7}\cmidrule(lr){8-9}
\textbf{Estimator} 
& $n=100$ & $n=200$ 
& $n=100$ & $n=200$ 
& $n=100$ & $n=200$ 
& $n=100$ & $n=200$ \\
\midrule
\ours            & \cellcolor{GreenYellow!25}\textbf{10.39} & \cellcolor{GreenYellow!25}\textbf{10.28} & \cellcolor{GreenYellow!25}\textbf{2.10}  & \cellcolor{GreenYellow!25}\textbf{2.14}  & \cellcolor{GreenYellow!25}\textbf{17.09} & \cellcolor{GreenYellow!25}\textbf{17.47} & \cellcolor{GreenYellow!25}\textbf{4.87} & \cellcolor{GreenYellow!25}4.94  \\
\ppct            & \cellcolor{GreenYellow!25}11.00          & \cellcolor{RedOrange!25}11.06 & \cellcolor{RedOrange!25}2.25 & \cellcolor{RedOrange!25}2.26 & \cellcolor{GreenYellow!25}17.87 & \cellcolor{RedOrange!25}17.97 & \cellcolor{GreenYellow!25}4.88 & \cellcolor{GreenYellow!25}\textbf{4.91} \\
\procova         & \cellcolor{RedOrange!25}11.81        & \cellcolor{RedOrange!25}10.62 & \cellcolor{RedOrange!25}2.24 & \cellcolor{RedOrange!25}2.22 & \cellcolor{RedOrange!25}18.38 & \cellcolor{RedOrange!25}18.11 & \cellcolor{RedOrange!25}5.18 & \cellcolor{RedOrange!25}5.09 \\
\aipw~(boosting)& \cellcolor{RedOrange!25}12.82          & \cellcolor{RedOrange!25}12.44 & \cellcolor{RedOrange!25}2.82 & \cellcolor{RedOrange!25}2.83 & \cellcolor{RedOrange!25}23.09 & \cellcolor{RedOrange!25}23.12 & \cellcolor{RedOrange!25}6.31 & \cellcolor{RedOrange!25}6.37 \\
\aipw~(standard)& \cellcolor{NavyBlue!25}11.72          & \cellcolor{NavyBlue!25}10.57 & \cellcolor{NavyBlue!25}2.22 & \cellcolor{NavyBlue!25}2.20 & \cellcolor{NavyBlue!25}18.09 & \cellcolor{NavyBlue!25}17.95 & \cellcolor{NavyBlue!25}5.09 & \cellcolor{NavyBlue!25}5.04 \\
\dm              & \cellcolor{GreenYellow!25}11.10          & \cellcolor{RedOrange!25}11.10 & \cellcolor{RedOrange!25}2.30 & \cellcolor{RedOrange!25}2.30 & \cellcolor{GreenYellow!25}18.07 & \cellcolor{RedOrange!25}18.08 & \cellcolor{RedOrange!25}5.61 & \cellcolor{RedOrange!25}5.62 \\

\midrule
& \multicolumn{2}{c}{Caprariello et al.\,(2013)}
& \multicolumn{2}{c}{Brandt (2013)}
& \multicolumn{2}{c}{Haaland et al.\,(2023)}
& \multicolumn{2}{c}{Shuman et al.\,(2024)} \\
\cmidrule(lr){2-3}\cmidrule(lr){4-5}\cmidrule(lr){6-7}\cmidrule(lr){8-9}
\textbf{Estimator} 
& $n=100$ & $n=200$ 
& $n=100$ & $n=200$ 
& $n=100$ & $n=200$ 
& $n=100$ & $n=200$ \\
\midrule
\ours            & \cellcolor{GreenYellow!25}\textbf{5.88}   & \cellcolor{GreenYellow!25}\textbf{5.96}   & \cellcolor{GreenYellow!25}\textbf{11.86} & \cellcolor{GreenYellow!25}\textbf{11.90} & \cellcolor{GreenYellow!25}\textbf{4.49}  & \cellcolor{GreenYellow!25}\textbf{4.44}  & \cellcolor{GreenYellow!25}\textbf{8.46} & \cellcolor{GreenYellow!25}\textbf{8.91} \\
\ppct            & \cellcolor{GreenYellow!25}5.99  & \cellcolor{GreenYellow!25}6.01  & \cellcolor{GreenYellow!25}12.07 & \cellcolor{GreenYellow!25}12.12 & \cellcolor{GreenYellow!25}4.50 & \cellcolor{GreenYellow!25}4.52 & \cellcolor{GreenYellow!25}9.08 & \cellcolor{GreenYellow!25}9.14 \\
\procova         & \cellcolor{RedOrange!25}6.41 & \cellcolor{GreenYellow!25}6.13  & \cellcolor{RedOrange!25}12.77 & \cellcolor{RedOrange!25}12.25 & \cellcolor{GreenYellow!25}4.73 & \cellcolor{GreenYellow!25}\textbf{4.44} & \cellcolor{GreenYellow!25}9.12 & \cellcolor{GreenYellow!25}9.55 \\
\aipw~(boosting)& \cellcolor{RedOrange!25}7.79  & \cellcolor{RedOrange!25}7.60  & \cellcolor{RedOrange!25}15.20 & \cellcolor{RedOrange!25}14.70 & \cellcolor{RedOrange!25}5.39 & \cellcolor{RedOrange!25}5.22 & \cellcolor{RedOrange!25}10.53 & \cellcolor{RedOrange!25}10.67 \\
\aipw~(standard)& \cellcolor{NavyBlue!25}6.39  & \cellcolor{NavyBlue!25}6.18  & \cellcolor{NavyBlue!25}12.55 & \cellcolor{NavyBlue!25}12.13 & \cellcolor{NavyBlue!25}4.82 & \cellcolor{NavyBlue!25}4.55 & \cellcolor{NavyBlue!25}9.20 & \cellcolor{NavyBlue!25}10.31 \\
\dm              & \cellcolor{GreenYellow!25}6.15  & \cellcolor{GreenYellow!25}6.15  & \cellcolor{RedOrange!25}12.81 & \cellcolor{RedOrange!25}12.80 & \cellcolor{RedOrange!25}5.72 & \cellcolor{RedOrange!25}5.71 & \cellcolor{RedOrange!25}13.83 & \cellcolor{RedOrange!25}13.83 \\
\bottomrule
\end{tabular}
\end{table*}

\paragraph{Estimators and metrics}  We implement $\ours$ by integrating predictions from three LLMs: GPT-4o, Claude 3.5 Haiku, and LLaMA 3 70B. For each LLM, we use $10$ different prompts for prediction and average over the responses~(see~\Cref{apx:prompts} for example prompts). We benchmark our estimator against
two standard estimators: $\estdm$~(\dm) and $\estaipw(\widehat h)$~(\aipw), where $\widehat h$ is the solution to the optimization problem in~\Cref{eq:stdoutcome} with either a linear (standard) or complex (boosting) function class. We also implement the concurrent $\ppct$ estimator~\citep{poulet2025prediction} and the \procova~estimator~\citep{liao2023prognostic,schuler2022increasing}, both using GPT-4o as the external model. These two serve as a more competitive baseline that also leverages predictions from foundation models (see~\cref{apx:implementation} for implementation details). To benchmark statistical precision, for each estimator $\widehat \theta$, we compute the scaled variance $\frac{1}{R} \sum_{r=1}^R n \widehat \var [\widehat \theta_r]$, where $\widehat \var$  is the empirical variance  obtained from  the dataset $\data_r$---as the sample size grows, the scaled variance approaches the asymptotic variance of the corresponding estimator.

\paragraph{Results} \Cref{tab:detailed_results} reports  the estimated variance of several competing estimators across eight different experimental studies and two sample sizes ($n=100$ and $n=200$). 
Across nearly all scenarios, 
$\ours$ consistently achieves the lowest variance among all estimators, and hence the tightest confidence interval. In particular, we observe variance reductions of roughly 5–11\% compared to the standard $\aipw$ estimator based on experimental data only. The gains are especially pronounced in the small-sample setting ($n=100$), where reducing variance is most critical. As expected, we observe that the $\ppct$ estimator can be less precise than the standard $\aipw$ estimator. This can be explained by noting that $\ppct$ is only guaranteed (asymptotically) to be at least as precise as the difference in means estimator~($\dm$). Further, we observe that the $\aipw$ estimator using a complex function class (boosting) suffers from very high variance, as the small sample sizes in the randomized experiments do not allow complex modeling choices. 
Lastly, while \Cref{thm:combine} guarantees that $\ours$ provides valid confidence intervals asymptotically, empirical results in \Cref{apx:coverage} confirm that all evaluated estimators—including 
$\ours$—maintain near-nominal coverage levels in finite samples.

\paragraph{Image treatments and contamination}
The study by \citet{shuman2024defend} is particularly relevant for two reasons. First, the data were published in December 2024, after the last known training cutoff for GPT-4o, ensuring it was not included in the model’s training set. Second, the treatment is an image rather than text, allowing us to evaluate our statistical framework beyond the text modality. Since the other foundation models do not support image inputs, we rely only on GPT-4o for outcome predictions in this study. Even so, $\ours$ achieves the lowest variance among all baselines, outperforming both $\ppct$ and $\procova$, suggesting that its gains over other approaches that integrate external models are not only due to the access to multiple models.

\subsection{Improving the accuracy of LLMs}
\label{sec:improving_accuracy}
We now study how two strategies for improving the accuracy of LLMs—model scale and inference-time compute—affect the precision of the $\ours$ estimator. In our setting, increasing inference-time compute boils down to presenting slight variations of the same prompt at inference time and averaging over the 
 responses. We provide the complete list of the prompts used in~\Cref{apx:multiprompt}. For each LLM $f$, we evaluate prediction performance using the Mean Squared Error (MSE) on the full dataset: $\frac{1}{N} \sum_{i=1}^N (f(X_i,A_i) - Y_i)^2$. Our findings indicate that larger models and increased inference-time compute can improve prediction accuracy, which in turn can reduce the variance of \ours.

\begin{figure*}[t]
 \centering
    \includegraphics[width=\textwidth]{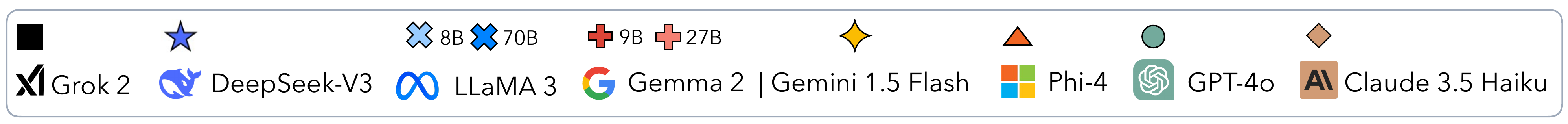}\\
 \vspace{4mm}
    \begin{subfigure}[t]{0.6\textwidth}
        \includegraphics[width=\textwidth]{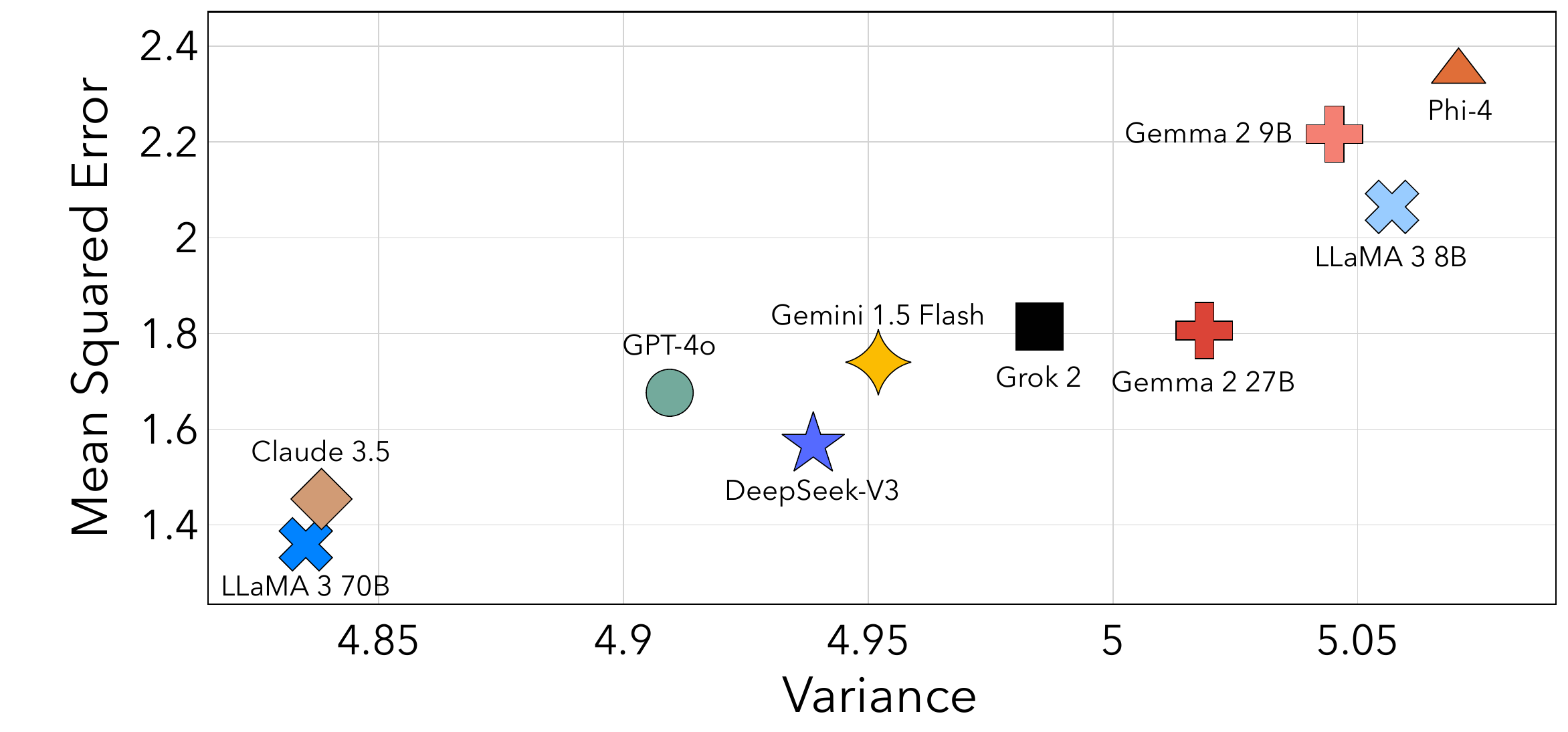}
        \caption{}
        \label{fig:scaling_laws}
    \end{subfigure}
    \hfill
    \begin{subfigure}[t]{0.33\textwidth}
        \includegraphics[width=\textwidth]{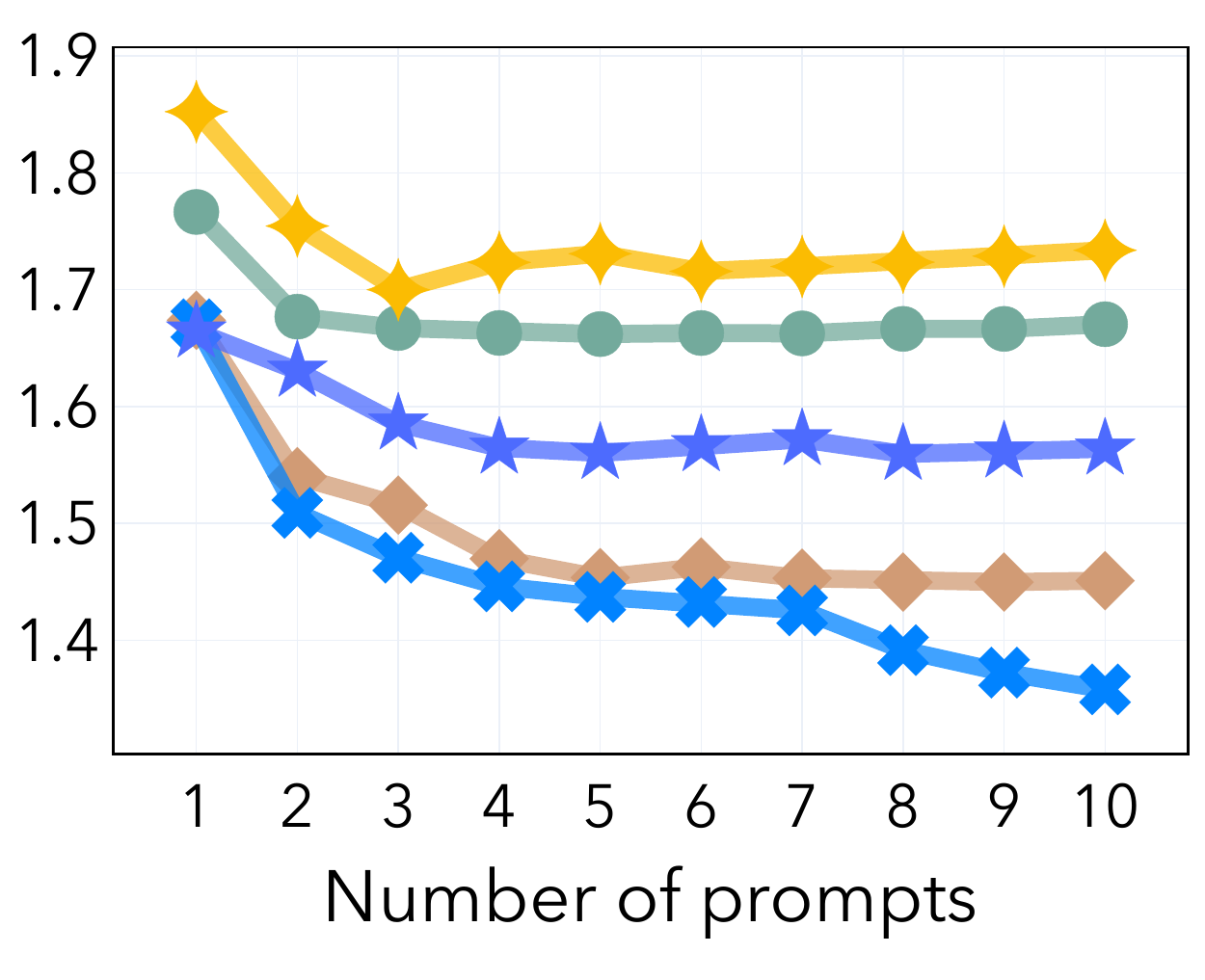}
        \caption{}
        \label{fig:test_compute}
    \end{subfigure}
\caption{\small{Impact of model scale and inference-time compute on the performance of $\ours$ in the study by \citet{fahey2023principled}. \textbf{(Left) Model scale}: \Cref{fig:scaling_laws} shows the relationship between the  estimate of the $\ours$ variance (average on $R=10k$ repetitions, sample size $n=50$) and mean squared error (MSE) for LLMs of varying sizes ($10$ prompts at inference time). \textbf{(Right) Inference-time compute}: \Cref{fig:test_compute} shows the impact on the MSE of increasing the number of prompts at inference time and averaging the resulting predictions.}
}
\end{figure*}
\paragraph{Model scale} 
\Cref{fig:scaling_laws} illustrates the precision gains achieved by $\ours$ when leveraging predictions from LLMs of varying scales. We study the relationship between MSE and the estimate of the $\ours$ variance when integrating predictions from \textit{small} models (LLaMA 3 8B, Gemma 2 9B, Phi-4, Gemma 2 27B) and \textit{large} models (LLaMA 3 70B, GPT-4o, Gemini 1.5 Flash, DeepSeek-V3, Claude 3.5 Haiku, Grok 2). Large models consistently achieve lower MSE and thus lower variance than smaller models, with LLaMA 3 70B excelling despite having fewer parameters than  GPT-4o and Claude 3.5 Haiku. 

\paragraph{Inference-time compute} 
\Cref{fig:test_compute} shows that averaging over many prompts consistently reduces the MSE for the large models---a similar trend is expected for the smaller ones. As smaller MSE is associated with higher precision (see~\Cref{fig:scaling_laws}), using multiple prompts is expected to improve the precision of $\ours$ further.
We confirm this observation in~\Cref{apx:ablation_testtime}, showing that $\ours$ precision improves with more prompts across several randomized studies.

\section{Conclusion}
We introduce \ours, a novel estimator that can improve the efficiency of randomized experiments by integrating predictions from multiple foundation models. Our empirical results on social science data demonstrate that $\ours$ improves precision, especially in sample-constrained settings, without compromising validity of the inference. This approach holds significant promise in fields such as medicine, where leveraging well-curated foundation models could substantially lower the costs of clinical trials. However, a key limitation of $\ours$ is its reliance on the  underlying foundation models: achieving meaningful gains in precision requires these models to be accurate and well-aligned with the experimental domain of interest. Further, finite-sample covariance estimation can be unstable when the number of models is large relative to the sample size, which may lead to undercoverage.
\section{Acknowledgments}
PDB was supported by the Hasler Foundation grant number 21050 and the Ermenegildo Zegna Founder's Scholarship. JA was supported by the ETH AI
Center. KD was supported by the ETH AI Center and the ETH Foundations of Data Science.
This work was supported in part by National Library of Medicine (NLM) award R01LM013616; National Heart, Lung, and Blood Institute~(NHLBI) award R01HL136708; and Patient-Centered Outcomes Research Institute (PCORI) award ME-2021C2-22365. The content is solely the responsibility of the authors and does not necessarily represent the official views of NLM, NHLBI, PCORI, PCORI’s Board of Governors or PCORI’s Methodology Committee.

\bibliography{main.bib}
\bibliographystyle{plainnat}

\newpage 
\section*{Appendices}
The following appendices provide deferred proofs, ablation studies, and experimental details.

\appendix
\DoToC
\clearpage
\hypersetup{
   linkcolor={pierLink},
    citecolor={pierCite},
    urlcolor={pierCite}
}
\section{Methodology}

\subsection{Proofs}
\subsubsection{Proof of Proposition~\ref{prop:rootn}}
\label{apx:proofaipw}
We adapt here a classic result from the semiparametric inference literature to our specific setting where the probability of treatment is known by design. For clarity, we refer to $\estaipw$ as $\esthaipw$.

Let us define the influence function of the \aipw~estimator for fixed outcome functions $h$ as:
$$\psi_i(h) = \left(\frac{A_i}{\pi_1}(Y_i - h(X_i, 1)) + h(X_i,1)\right) - \left(\frac{1-A_i}{\pi_0}(Y_i - h(X_i,0)) + h(X_i,0)\right).$$
We can then decompose the estimation error of the \aipw~estimator as follows:
$$\sqrt{n}(\esthaipw(\widehat{h}) - \theta) = \underbrace{\sqrt{n}(\esthaipw(h^\dagger) - \theta)}_{\defeq T_1} + \underbrace{\sqrt{n}(\esthaipw(\widehat{h}) - \esthaipw(h^\dagger))}_{\defeq T_2}.$$
The first term, $T_1$, is an average of i.i.d. random variables with mean zero and finite variance. Therefore, by the Central Limit Theorem, we have:
$$\sqrt{n}(\esthaipw(h^\dagger) - \theta) = \sqrt{n}\left( \frac{1}{n} \sum_{i=1}^n \psi_i(h^\dagger) - \theta \right) \rightsquigarrow \mathcal{N}(0, V_{h^\dagger}),$$
 where the asymptotic variance is given by $V_{h^\dagger} = \mathbb{E}[(\psi_i(h^\dagger) - \theta)^2]$.

\paragraph{Bounding the remainder term}
We need to show that the second term $T_2$ is asymptotically negligible, that is $T_2= o_{\p^\ast}(1)$.

We can rewrite this term as:
$$T_2 = \sqrt{n}(\esthaipw(\widehat{h}) - \esthaipw(h^\dagger)) = \frac{1}{\sqrt{n}} \sum_{i=1}^n \left(\psi_i(\widehat{h}) - \psi_i(h^\dagger)\right).$$
Further, with some simple algebra we can decompose the difference in the influence functions as:
\begin{align*}
  \frac{1}{\sqrt{n}} \sum_{i=1}^n (\psi_i(\widehat{h}) - \psi_i(h^\dagger)) 
  &= \frac{1}{\sqrt{n}} \sum_{i=1}^n \left(\frac{A_i-\pi_1}{\pi_1}\right) (h^\dagger(X_i,1) - \widehat{h}(X_i,1)) - \frac{1}{\sqrt{n}} \sum_{i=1}^n \left(\frac{A_i-\pi_1}{1-\pi_1}\right) (\widehat{h}(X_i,0) - h^\dagger(X_i,0))
\end{align*}
 Now, we will show that both terms in the sum above are asymptotically negligible. We focus our proof on the first term; the second follows from symmetric arguments.

Let $Z_i = (X_i,A_i,Y_i)$ and $\mathbb{P}_n$ denote the empirical measure over $Z_1, \dots, Z_n$, and define the following functions:
 $$f(Z_i) \defeq \frac{A_i-\pi_1}{\pi_1}~h^\dagger(X_i,1)~~\text{and}~~\widehat{f}(Z_i) \defeq \frac{A_i-\pi_1}{\pi_1}~\widehat{h}(X_i,1).$$ We can rewrite the first term as:
$$\frac{1}{\sqrt{n}} \sum_{i=1}^n \left(\frac{A_i-\pi_1}{\pi_1}\right) (h^\dagger(X_i,1) - \widehat{h}(X_i,1)) = \sqrt{n}~(\mathbb{P}_n - \mathbb{P})(f - \widehat{f}),$$
where we use the fact that $\p(f-\widehat f)=0$, since the treatment probability is known. Since $\widehat h$ is estimated from an independent sample, conditioning on $\widehat h$
the variables $\{ f(Z_i)-\widehat f(Z_i)\}_{i=1}^n$ are i.i.d. with mean $0$ and variance
$\|\widehat f-f\|_{L^2(\mathbb P)}^2$. Hence, by Chebyshev,
    $$(\mathbb{P}_n - \mathbb{P})(\widehat{f} - f) = O_{\p^\ast}\left(\frac{||\widehat{f} - f||_{L_2(\mathbb P)}}{\sqrt{n}}\right) = o_{\p^\ast}\left(\frac{1}{\sqrt n}\right ),$$
where it follows from assumptions that $||\widehat{f} - f||_{L_2(\mathbb P)} = o_{\p^\ast}(1)$. Therefore, it also follows that $T_2 = o_{\p^\ast}(1)$. 

\subsubsection{Proof of Theorem \ref{thm:combine}}
\label{apx:proofhaipw}
Recall that $\Sigma \defeq \operatorname{Cov}[ (\psi( h^\dagger), \ldots, \psi(f_k))^\top]$ and  define the oracle weights as $\lambda^\star = \underset{\lambda \in \Lambda}{\arg\min}~ \lambda^\top \Sigma \lambda$. The corresponding oracle estimator is then   $$
\esthaipw_{\lambda^\star} = \lambda^\star_1 \estaipw(\widehat h) + \sum_{j=1}^k \lambda^\star_{j+1} \estaipw(f_j).$$ We now prove the theorem in the following three steps.

First,  we observe that $\esthaipw_{\lambda^\star}$ can also be written as $$ \esthaipw_{\lambda^\star} = \estaipw\left(\lambda^\star_1\widehat h+ \sum_{j=1}^k \lambda^\star_{j+1} f_j\right),$$ since the constraint set is $\Lambda = \{\lambda \in \RR^{k+1}: \sum_{j=1}^{k+1} \lambda_i =1\}$. Further, it follows from assumptions that $\lambda^\star_1\widehat h+ \sum_{j=1}^k \lambda^\star_{j+1} f_j$ is also an outcome function estimator that satisfies the conditions in Proposition~\ref{prop:rootn}, therefore  $\esthaipw_{\lambda^\star}$ is consistent and asymptotically normal, i.e. it holds that
$$
\sqrt n (\esthaipw_{ \lambda^\star}   - \ate) \rightsquigarrow \gauss(0, V_{\lambda^\star}),~~\text{where}~~ V_{\lambda^\star} = \lambda^{\star \top} \Sigma \lambda^\star.
$$

Second, we show that the asymptotic variance \( V_{\lambda^\star} \) satisfies  
\[
V_{\lambda^\star} \leq \Sigma_{jj}~~ \text{for}~~j =1, \ldots, k+1.
\] By construction, the oracle weights \(\lambda^\star\) minimize \(\lambda^\top \Sigma \lambda\), ensuring \(\esthaipw_{\lambda^\star}\) attains the smallest asymptotic variance among all convex combinations of the initial estimators:  
\[
\left\{ \esthaipw_{\lambda} \coloneqq \lambda_1 \estaipw(\widehat h) + \sum_{j=1}^k \lambda_{j+1} \estaipw(f_j) \,\big|\, \lambda \in \Lambda \right\}.
\] 
Moreover, since $\lambda^\star$ is defined as the minimizer of $\lambda^\top\Sigma\lambda$ subject to $\mathbf{1}^\top\lambda=1$, it holds that for any canonical vector $e_j \in \RR^{k+1}$ (which corresponds to using the $j$th estimator alone) we have
\[
V_{\lambda^\star} = \lambda^\star{}^\top \Sigma \lambda^\star \leq e_j^\top\Sigma e_j = \Sigma_{jj}, \quad \text{for}~~j=1,\ldots,k+1.
\]  
Thus, the asymptotic variance of the hybrid estimator is no larger than that of any individual estimator.

Third, we observe that $\esthaipw_{\hat \lambda}$ and $\esthaipw_{\lambda^\star}$ are asymptotically equivalent. Since $\widehat{\Sigma}\stackrel{p}{\to}\Sigma$ and $\Sigma$ is nonsingular, the continuous mapping theorem implies
\[
\widehat{\lambda} = \frac{\widehat{\Sigma}^{-1}\mathbf{1}}{\mathbf{1}^\top\widehat{\Sigma}^{-1}\mathbf{1}} \stackrel{p}{\to} \frac{\Sigma^{-1}\mathbf{1}}{\mathbf{1}^\top\Sigma^{-1}\mathbf{1}} = \lambda^\star.
\]
Finally, using Slutsky's theorem, we get:
$$\sqrt{n}(\esthaipw_{\hat \lambda}- \theta) = \sqrt{n}(\esthaipw_{\lambda^\star}  - \theta) + o_{\p^*}(1) \rightsquigarrow \mathcal{N}(0, V_{\lambda^\star}),$$
which completes the proof.

 \subsection{Connection with prediction-powered inference}
 \label{apx:ppi}
 To further study the connection and differences with prediction-powered inference ($\ppi$) \cite{angelopoulos2023prediction}, it is instructive to consider the simpler problem of estimating the counterfactual mean, $\EE[Y(1)]$\footnote{We refer the reader to~\citet{xu2025unified} for a discussion of the connections between \aipw~and \ppi.}. For this case, a variant of $\ppi$, referred to as $\ppi$++~\citep{angelopoulos2023ppi++}, can  be shown to be equivalent to an $\aipw$ estimator.

  The  standard difference in mean estimator is the sample mean of outcomes for the treated group: $$\estdm  = \frac{1}{n_1} \sum_{i: A_i=1} Y_i,~\text{where}~n_a = \sum_{i=1}^n \indi\{A_i=a\}.$$  $\ppi$++ improves the difference in mean estimator by using predictions from a black-box model $f$: 
 $$ \esthaipw_{\ppi++} =  \frac{1}{n_1} \sum_{i: A_i=1}  Y_i  + \lambda \left( -\frac{1}{n_1} \sum_{i:A_i=1} f(X_i) + \frac{1}{n_0} \sum_{i:A_i=0} f(X_i) \right),$$
 where the power-tuning parameter $\lambda$ is chosen to minimize the variance. Crucially, for $\lambda =\frac{n_0}{n_1+n_0}$, assuming exact randomization, i.e. $\pi_1 = n_1/n$, we have equivalence  with the $\aipw$ estimator for the counterfactual mean, 
 $$
  \esthaipw_{\ppi++}  = \frac{1}{n} \sum_{i=1}^n \left(\frac{A_i (Y_i- f(X_i))}{\pi_1} + f(X_i) \right) = \estaipw(f).
 $$
 A few remarks are in order.
 \begin{itemize}
     \item $\ppi$++ replaces the estimated outcome regression with a black-box  model $f$. However, when $f$ is not equivalent to the outcome regression $\EE[Y\mid X, A=1] $, the resulting estimator will not be efficient. In other words, $\esthaipw_{\ppi++}$ will not achieve the smallest asymptotic variance among the regular estimators of the counterfactual mean. Concurrent work by~\citet{ji2025predictions} similarly identifies this limitation and proposes a recalibrated version of \ppi~to overcome it. By contrast, the $\aipw$ estimator will achieve the smallest possible asymptotic variance, assuming that the outcome regression estimator is consistent in $L_2$-norm. This condition is easier to satisfy in the setting of randomized experiments, since we can use  flexible machine-learning models and still have  valid inference as a consequence of Proposition~\ref{prop:rootn}. In particular, our \ours~estimator is guaranteed to have asymptotic variance no greater than the standard $\aipw$ estimator (\Cref{thm:combine}), and thus can be efficient even if the black-box model $f$ is arbitrarily biased.
     \item Extending $\ppi$ and $\ppi$++ to average treatment effect estimation is not straightforward. To do so, \citet{poulet2025prediction} proposes the following estimator:
     $$
     \esthaipw_{\textsc{Ppct}} \defeq 
     \frac{1}{n_1} \sum_{A_i =1} (Y_i - \lambda f(X_i)) -  \frac{1}{n_0} \sum_{A_i =0} (Y_i - \lambda f(X_i)). 
     $$
 However, a key limitation of the above estimator is that it forces both outcome regression models (for treated and control groups) to be replaced with the same black-box model \( f \). This is particularly problematic when the treatment has a significant effect on the outcome, as a single model \( f \) will fail to accurately capture both conditional means. In contrast, our approach  allows for different black-box models $f_1$ and $f_0$ to be plugged-in for the treated and control group, respectively.
    
     \item \(\ppi\) and its variants cannot integrate multiple competing foundation models. This is a key limitation in the causal inference setting, as model selection is a non-trivial task due to the missingness of potential outcomes. Moreover, it is unclear whether they can be extended to do so, as constructing a consistent estimate of the covariance matrix \(\Sigma\) poses a major hurdle. In contrast, our approach offers a simple way to estimate the covariance matrix $\Sigma$ by exploiting the linear structure of the \(\aipw\) estimators.
     \end{itemize}

\subsection{Dependency of the variance term on the estimation error $\|\widehat{h} - h^\star\|_{L_2(\mathbb P)}$} \label{apx:excessvar}
As mentioned in Section~\ref{sec:AIPW}, in small sample regimes the variance of the $\aipw$ estimator crucially depends on the estimation error $\|\widehat{h} - h^\star\|_{L_2(\mathbb P)}$. For completeness,  we formalize here this dependency by bounding the excess variance of the $\aipw$ estimator that arises from using $\widehat h$ instead of $h^\star$.
\begin{lemma}
\label{lem:variance_bound}
    For any outcome regression $\widehat h$ estimated from an independent sample, we have
    \begin{align*}
     \var(\sqrt{n} ~\estaipw(\widehat h)) -  \var(\sqrt{n} ~\estaipw(h^\star)) =   \mathbb E\left[ \left(\sqrt{\frac{\pi_1}{\pi_0}} \left(\widehat h(X, 0) - h^\star(X, 0) \right)+ \sqrt{\frac{ \pi_0}{\pi_1}} \left(\widehat h(X, 1) - h^\star(X, 1) \right) \right)^2 \right].
    \end{align*}
And thus, it holds that \begin{equation*}
\var(\sqrt{n}~\estaipw(\widehat h)) -  \var(\sqrt{n}~\estaipw(h^\star))  \leq \frac{1}{\pi_0} \| \widehat h(.,0) - h^\star (., 0) \|^2_{L_2(\mathbb P)} + \frac{1}{\pi_1} \|\widehat h(.,1) - h^\star (., 1) \|^2_{L_2(\mathbb P)}.     \end{equation*}

\end{lemma}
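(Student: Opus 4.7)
The plan is to work entirely at the level of the influence function $\psi(h)$. Since $\widehat h$ is trained on an independent sample and the propensity $\pi_a$ is known, conditioning on $\widehat h$ reduces $\var(\sqrt n\,\estaipw(\widehat h))$ to $\var(\psi(\widehat h))$ (and similarly for $h^\star$), so it suffices to prove the identity for $\var(\psi(\widehat h)) - \var(\psi(h^\star))$. First I would compute $\psi(\widehat h) - \psi(h^\star)$ directly. Letting $\Delta_a(X) := \widehat h(X,a) - h^\star(X,a)$ and using $\pi_0+\pi_1=1$, the four additive pieces collapse to
$$\psi(\widehat h) - \psi(h^\star) \;=\; -(A-\pi_1)\left[\frac{\Delta_1(X)}{\pi_1} + \frac{\Delta_0(X)}{\pi_0}\right],$$
which has mean zero since $\mathbb E[A-\pi_1\mid X]=0$ under randomization.

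Next I would use the decomposition
$$\var(\psi(\widehat h)) \;=\; \var(\psi(h^\star)) + 2\,\mathrm{Cov}\!\left(\psi(h^\star),\psi(\widehat h)-\psi(h^\star)\right) + \var(\psi(\widehat h)-\psi(h^\star)),$$
and show the cross term vanishes. This is the main (but only mild) obstacle: I would compute $\mathbb E[\psi(h^\star)(A-\pi_1)\mid X]$ and exploit two facts that hold precisely because $h^\star$ is the true conditional mean under randomization: the residual $Y-h^\star(X,A)$ has zero conditional mean given $(X,A)$, and $\mathbb E[A-\pi_1\mid X]=0$. Each of the $A$, $1-A$, and constant-in-$A$ summands of $\psi(h^\star)(A-\pi_1)$ then integrates to $0$ against the conditional law of $A$ given $X$. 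Multiplying by the $X$-measurable bracket and taking outer expectations yields the orthogonality.

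Third, I would evaluate $\var(\psi(\widehat h)-\psi(h^\star))$. Since $A\perp X$ gives $\mathbb E[(A-\pi_1)^2\mid X]=\pi_0\pi_1$, the variance equals $\pi_0\pi_1\,\mathbb E[(\Delta_1(X)/\pi_1 + \Delta_0(X)/\pi_0)^2]$, and distributing the factor $\pi_0\pi_1$ inside the square produces exactly
$$\mathbb E\!\left[\left(\sqrt{\pi_0/\pi_1}\,\Delta_1(X) + \sqrt{\pi_1/\pi_0}\,\Delta_0(X)\right)^2\right],$$
which is the claimed identity. For the upper bound, I would expand the square and apply Young's inequality $2\Delta_0\Delta_1 \leq \Delta_0^2 + \Delta_1^2$ on the cross term; the resulting coefficient of $\Delta_a^2$ simplifies via $\pi_0+\pi_1=1$ to $1/\pi_a$, yielding $\|\widehat h(\cdot,1)-h^\star(\cdot,1)\|_{L_2(\mathbb P)}^2/\pi_1 + \|\widehat h(\cdot,0)-h^\star(\cdot,0)\|_{L_2(\mathbb P)}^2/\pi_0$ as required.
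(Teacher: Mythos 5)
Your proposal is correct and follows essentially the same route as the paper's proof: the same decomposition of the excess variance into a cross term, which vanishes by conditioning using $\mathbb E[Y\mid X,A]=h^\star(X,A)$ and $\mathbb E[A\mid X]=\pi_1$, plus the second moment of $\psi(\widehat h)-\psi(h^\star)$. Your factored form $\psi(\widehat h)-\psi(h^\star)=-(A-\pi_1)\bigl(\Delta_1(X)/\pi_1+\Delta_0(X)/\pi_0\bigr)$ with $\Delta_a:=\widehat h(\cdot,a)-h^\star(\cdot,a)$ merely streamlines the paper's term-by-term expansion of that quadratic term, and your Young-inequality step supplies the final bound exactly as stated.
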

\begin{proof}[Proof of Lemma~\ref{lem:variance_bound}]
  Note that by the unbiasedness of the $\aipw$ estimator, as well as the independence of the samples used to compute the outcome regression $\widehat h$, the excess variance equals:
\begin{align*}
n \var(\estaipw(\widehat h)) -  n \var(\estaipw(h^\star)) &= \mathbb E\left[ (\psi(\widehat h) - \theta)^2 - (\psi( h^\star) - \theta)^2 \right] \\
&= \mathbb E \left[ \underbrace{  2 \triangle \psi ~(\psi( h^\star) - \theta)}_{=: T_1} + \underbrace{\triangle \psi^2 }_{=:T_2}  \right],
\end{align*}
with  $\triangle \psi = \psi(\widehat h) - \psi(h^\star)$.
We bound the two terms $T_1$ and $T_2$ separately. Recall that by definition, $$\psi(h) \defeq  \left(\frac{A}{\pi_1}(Y - h(X, 1)) + h(X,1)\right) - \left(\frac{1-A}{\pi_0}(Y - h(X,0)) + h(X,0)\right),
$$
and thus, 
\begin{equation*}
    \triangle \psi = \triangle h_1 \left(1 -\frac{A}{\pi_1}\right) -\triangle h_0 \left(1 - \frac{1-A}{\pi_0}\right),~\text{with}~\triangle h_i (X) := \widehat h(X, i) - h^\star(X, i).
\end{equation*}
which does not depend on $Y$. Hence, taking the expectation first over $Y$ for $T_1$ yields:
\begin{align*}
    T_1 = 2~ \mathbb E \left[\triangle \psi ~(h^\star(X, 1) - h^\star(X, 0) - \theta)\right], 
\end{align*}
where we used the fact that $\mathbb E \left[ Y\vert X, A \right] = h^\star(X, A)$. Finally, since  $\mathbb E \left[ \triangle \psi \vert X  \right] = 0$, we also obtain that  $ T_1 = 0$. 

Next, to bound the second term $T_2$, we can write:
\begin{equation*}
    T_2 = \mathbb E\left[\triangle h_1^2 \left(1 -\frac{A}{\pi_1}\right)^2 + \triangle h_0^2 \left(1 -\frac{1-A}{\pi_0}\right)^2 - 2 \triangle h_1  \triangle h_0 \left(1 -\frac{1-A}{\pi_0}\right) \left(1 -\frac{A}{\pi_1}\right)\right]. 
\end{equation*}
A straightforward computation (using $\pi_1 = 1- \pi_0$) yields:
\begin{align*}
    &\mathbb E\left[\triangle h_1^2 \left(1 -\frac{A}{\pi_1}\right)^2\right] =  \mathbb E\left[\triangle h_1^2 \right]  \left( \frac{1}{1- \pi_0} -1 \right)\\ \text{and}\quad &\mathbb E\left[\triangle h_0^2 \left(1 -\frac{1-A}{\pi_0}\right)^2\right] =  \mathbb E\left[\triangle h_0^2 \right]\left( \frac{1}{\pi_0} -1 \right) \\
    \text{and} \quad & - \mathbb E\left[2 \triangle h_1  \triangle h_0 \left(1 -\frac{1-A}{\pi_0}\right) \left(1 -\frac{A}{\pi_1}\right)\right] = 2 \mathbb E \left[ \triangle h_0 \triangle h_1\right].
\end{align*}
As a result, we obtain:
\begin{equation*}
    T_2 = \mathbb E\left[ \left(\sqrt{\frac{1- \pi_0}{\pi_0}} \triangle h_0 + \sqrt{\frac{ \pi_0}{1 -\pi_0}} \triangle h_1 \right)^2 \right], 
\end{equation*}
which completes the proof. 
\end{proof}

\newpage
\section{Additional experiments}
We present here additional ablations of our method. The results reinforce the general trends observed in the main experiments: $\ours$ achieves better precision than the baselines while maintaining comparable coverage. Ablation studies provide insight into the number of models that can be incorporated into our estimator without significantly compromising validity (due to finite sample effects), and they offer further evidence of the advantages of increasing inference-time compute.

\begin{table*}[t!]
\centering
\caption{\small 
Coverage probability comparison of $\ours$ against baseline estimators ($\ppct$, $\dm$, $\aipw$, $\procova$) across several randomized experiments.
We report the empirical coverage of each estimator’s $1 - \alpha$ confidence interval. The coverage probability is averaged over $R = 10000$ subsampling repetitions at sample sizes $n = 100$ and $n = 200$. The nominal level is set at $\alpha = 0.05$. We implement $\ours$ by integrating predictions from three LLMs: GPT-4o, Claude 3.5 Haiku, and LLaMA 3 70B.}
\label{tab:coverage}
\small
\setlength{\tabcolsep}{3pt}
\renewcommand{\arraystretch}{1.1}
\begin{tabular}{lcccccccc}
\toprule
& \multicolumn{2}{c}{Melin et al.\,(2022)} 
& \multicolumn{2}{c}{Silverman et al.\,(2022)} 
& \multicolumn{2}{c}{Kennedy et al.\,(2020)} 
& \multicolumn{2}{c}{Fahey et al.\,(2023)} \\
\cmidrule(lr){2-3}\cmidrule(lr){4-5}\cmidrule(lr){6-7}\cmidrule(lr){8-9}
\textbf{Estimator} 
& \(n=100\) & \(n=200\) 
& \(n=100\) & \(n=200\) 
& \(n=100\) & \(n=200\) 
& \(n=100\) & \(n=200\) \\
\midrule
\ours              & 96.2 & 98.4 & 96.4 & 98.4 & 94.4 & 95.9 & 94.5 & 95.6 \\
\ppct              & 96.4 & 98.4 & 96.7 & 98.7 & 95.0 & 96.5 & 94.9 & 95.7 \\
\procova           & 96.4 & 98.4 & 96.4 & 97.7 & 94.9 & 96.2 & 95.2 & 95.8 \\
\aipw~(standard)     & 96.5 & 98.5 & 96.3 & 97.7 & 94.8 & 96.1 & 95.3 & 96.0 \\
\aipw~(boosting)   & 95.6   & 97.2   & 95.6   & 96.6   &  94.4  & 95.0   & 94.1   & 94.6   \\
\dm                & 96.6 & 98.5 & 96.9 & 98.7 & 95.4 & 96.5 & 95.8 & 96.7 \\

\midrule
& \multicolumn{2}{c}{Caprariello et al.\,(2013)}
& \multicolumn{2}{c}{Brandt (2013)}
& \multicolumn{2}{c}{Haaland et al.\,(2023)}
& \multicolumn{2}{c}{Shuman et al.\,(2024)} \\
\cmidrule(lr){2-3}\cmidrule(lr){4-5}\cmidrule(lr){6-7}\cmidrule(lr){8-9}
\textbf{Estimator} 
& \(n=100\) & \(n=200\) 
& \(n=100\) & \(n=200\) 
& \(n=100\) & \(n=200\) 
& \(n=100\) & \(n=200\) \\
\midrule
\ours              & 96.8 & 99.3 & 95.7 & 97.2 & 95.0 & 96.2 & 95.3 & 96.2 \\
\ppct              & 97.2 & 99.3 & 96.1 & 97.7 & 95.0 & 96.3 & 94.9 & 96.4 \\
\procova           & 97.5 & 99.4 & 96.3 & 97.6 & 95.2 & 96.2 & 95.0 & 95.8 \\
\aipw~(standard)     & 97.6 & 99.4 & 96.3 & 97.4 & 95.2 & 96.4 & 95.1 & 96.1 \\
\aipw~(boosting)   & 96.7   & 98.6   &  95.5  & 95.0   & 94.3   & 95.4   & 93.4   & 95.1   \\
\dm                & 97.4 & 99.4 & 96.6 & 98.1 & 95.3 & 96.3 & 95.5 & 96.5 \\
\bottomrule
\end{tabular}
\end{table*}

\subsection{Empirical evaluation of coverage probability}
\label{apx:coverage}
To benchmark validity, for each estimator, we compute the fraction of confidence intervals containing the average treatment effect:
$$
\operatorname{Coverage} = \frac{1}{R}  \sum_{r=1}^R \indi \{ \theta \in \mathcal C_r ^\alpha \},
$$
where $\CC_r^\alpha$ is the confidence interval  obtained from the dataset $\mathcal D_r$
and $\theta$ is the difference in means ATE estimate from the full study dataset. While this is not necessarily the true ATE, it serves as the best available proxy in the context of real  randomized experiments. \Cref{tab:coverage} shows that 
$\ours$ consistently achieves coverage probability close to the nominal 95\% level across all studies and both sample sizes. Importantly, this indicates that the variance reductions observed in \Cref{tab:detailed_results} do not come at the expense of statistical validity.

\subsection{Impact of adding more foundation models on statistical precision}
\label{apx:adding_models}
In this section, we study the impact of increasing the number of models in $\ours$. Specifically, \Cref{algo:haipw} requires integrating predictions from multiple foundation models, which are combined with the standard $\aipw$ to minimize the variance of the resulting estimator. In \Cref{fig:adding_models}, we show how increasing the number of language models from 1 to 7 affects the precision and validity of $\ours$ in the study by \citet{fahey2023principled}. Models are incorporated in the estimator sequentially, starting from those with the lowest mean squared error (MSE) (i.e. LLaMA 3 70B) to those with the highest (stopping at Gemma 2 27B), following \Cref{fig:scaling_laws}. We also include the standard $\aipw$ (linear) estimator for reference.

Increasing the number of models improves precision compared to the standard $\aipw$ estimator. In the setting with 50 samples, a single model improves variance by approximately $6\%$, while using 4 models increases this gain to nearly $12\%$, and 7 models yield an improvement of around $16\%$. However, the marginal benefits diminish with larger sample sizes: at 200 observations, the variance difference between using 1 and 7 models shrinks to $4\%$. However, adding more models weakens empirical coverage. With 50 samples, combinations of 5 to 7 models exhibit undercoverage of $2\%$–$4\%$ relative to $\aipw$, failing to reach the nominal $95\%$ coverage until the sample size reaches 200. In contrast, combinations of 1 to 3 models maintain coverage levels comparable to $\aipw$.  

Intuitively, the undercoverage observed in~\Cref{fig:adding_models} is driven by finite-sample error in estimating the covariance matrix. As the number of models increases, the covariance matrix is harder to estimate reliably from limited data. This, in turn, leads to a systematic underestimation of the combined estimator's variance and hence to undercoverage. Such effects have been formally proven in recent ``no free lunch'' results on prediction-powered inference~\citep{mani2025no}. A simple remedy  is to use cross-fitting when estimating the covariance matrix, ensuring that the same data are not reused for both covariance matrix estimation and inference.

Practitioners should therefore carefully determine both the number of models to include in the ensemble and whether to adopt  cross-fitting. In our experiments, with moderately large samples ($n = 100$ and $n=200$) and only three outcome models, we did not observe undercoverage, suggesting that the additional estimation error is negligible and standard plug-in inference is sufficient. By contrast, with smaller samples or ensembles of several models, the covariance matrix's estimation error becomes non-negligible, and  cross-fitting is recommended to ensure valid coverage.

\begin{figure}[t!]
 \centering
    \includegraphics[width=0.6\textwidth]{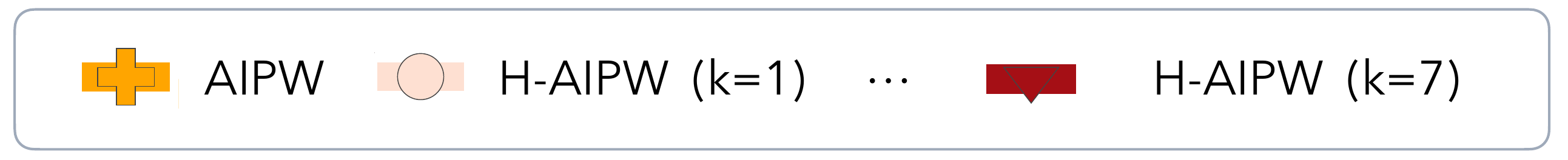}\\
    \centering
    \begin{subfigure}{0.35\textwidth}
        \centering
        \includegraphics[width=\linewidth]{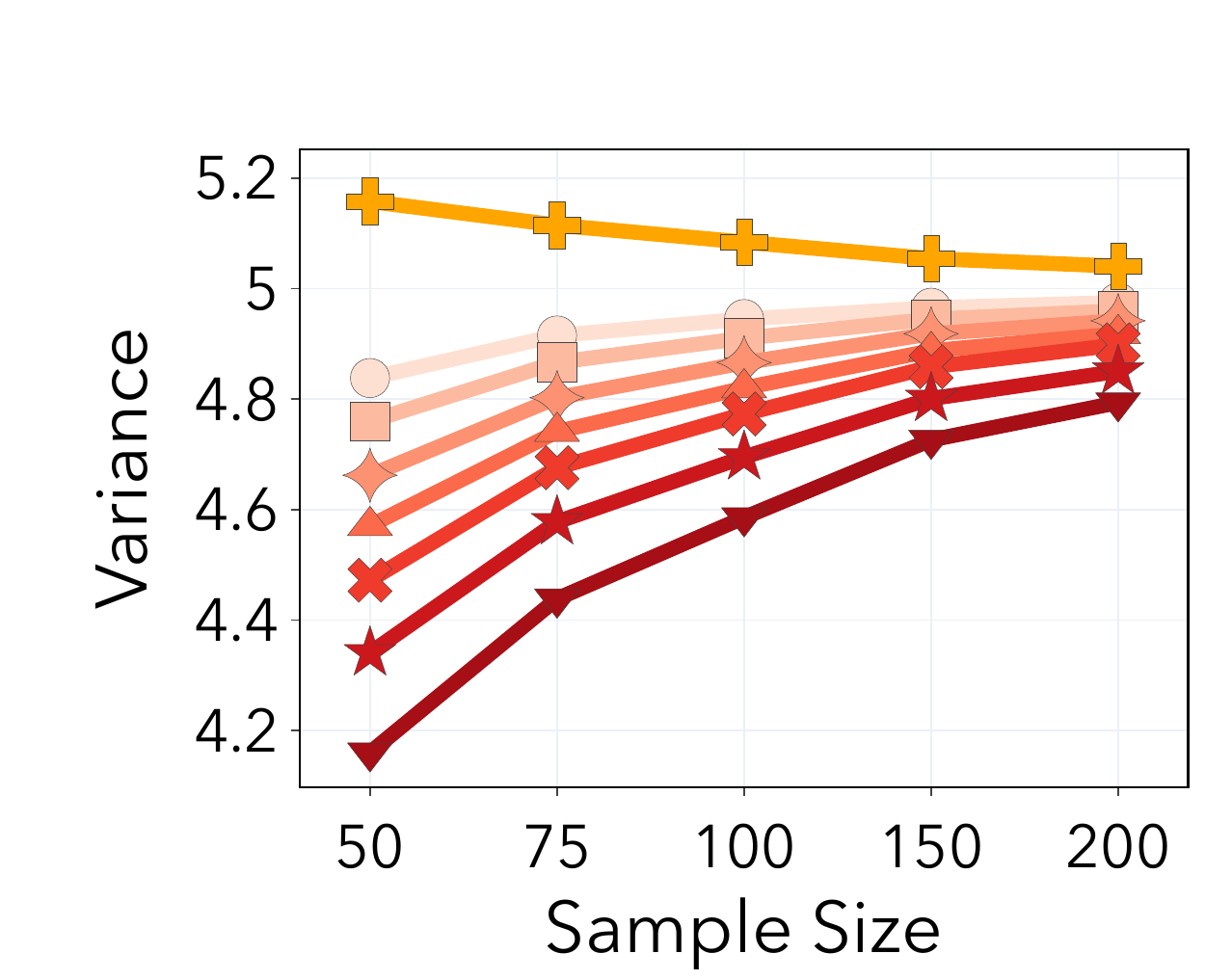}
        \label{fig:adding_models_precision}
    \end{subfigure}
    \hspace{30pt}
    \begin{subfigure}{0.35\textwidth}
        \centering
        \includegraphics[width=\linewidth]{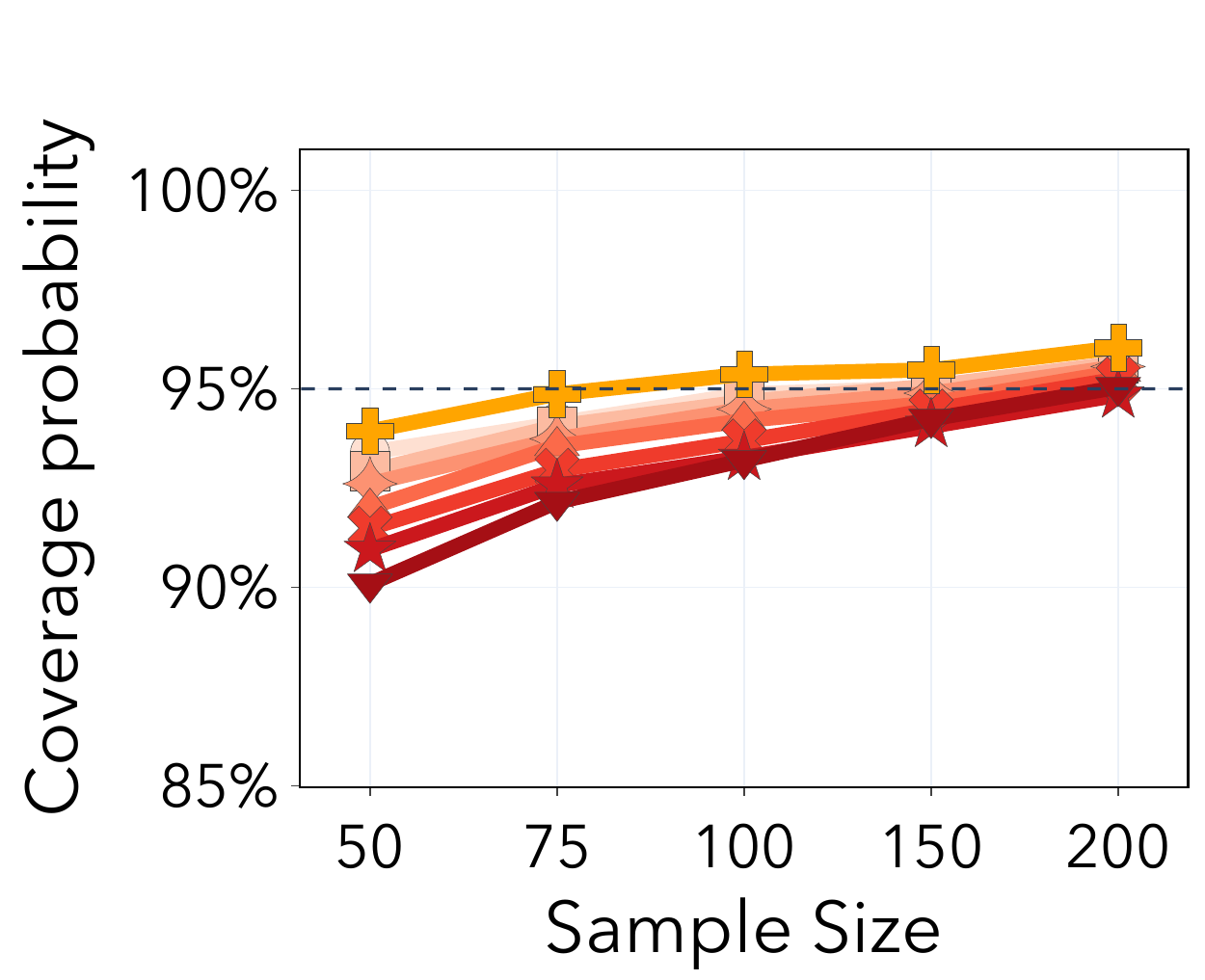}
        \label{fig:adding_models_validity}

    \end{subfigure}
\caption{\small{Impact of increasing the number of models in $\ours$ on precision and validity in the study by \citet{fahey2023principled}. Models are sequentially incorporated based on their mean squared error (MSE), starting with LLaMA 3 70B (lightest red, $k=1$) and ending with Gemma 2 27B (darkest red, $k=7$), following \Cref{fig:scaling_laws}. The left panel shows the empirical  variance, while the right panel shows empirical coverage. The standard $\aipw$ estimator is included for reference. Each experiment is averaged over $R=10k$ repetitions, with significance level set to $\alpha=0.05$.}}
    \label{fig:adding_models}
\end{figure}

\subsection{Impact of inference-time compute on statistical precision}
\label{apx:ablation_testtime}

In \Cref{sec:improving_accuracy}, we showed that increasing inference-time compute improves the precision of $\ours$: more prompts generally reduce mean squared error (MSE) of the foundation model predictions, which in turn lowers the estimator's variance. For completeness, \Cref{fig:prompts_comparison} visualizes the relationship between the number of prompts, MSE, and variance.

We present results for three studies---\citet{brandt2013onset,silverman2022putting, kennedy2020accidental}---using $\ours$ with predictions from GPT-4o. \Cref{fig:prompt_ci_Brandt,fig:prompt_ci_Ken,fig:prompt_ci_Silver} show the empirical estimate of the variance as a function of the number of prompts, while \Cref{fig:prompt_mse_b,fig:prompt_mse_k,fig:prompt_mse_s} illustrate the corresponding changes in MSE. The findings reinforce the conclusions from the main text: increasing inference-time compute through multiple prompts generally reduces the variance of $\ours$.

\begin{figure}[t!]

    \centering

        \begin{subfigure}[t]{0.3\textwidth}
        \includegraphics[width=\textwidth]{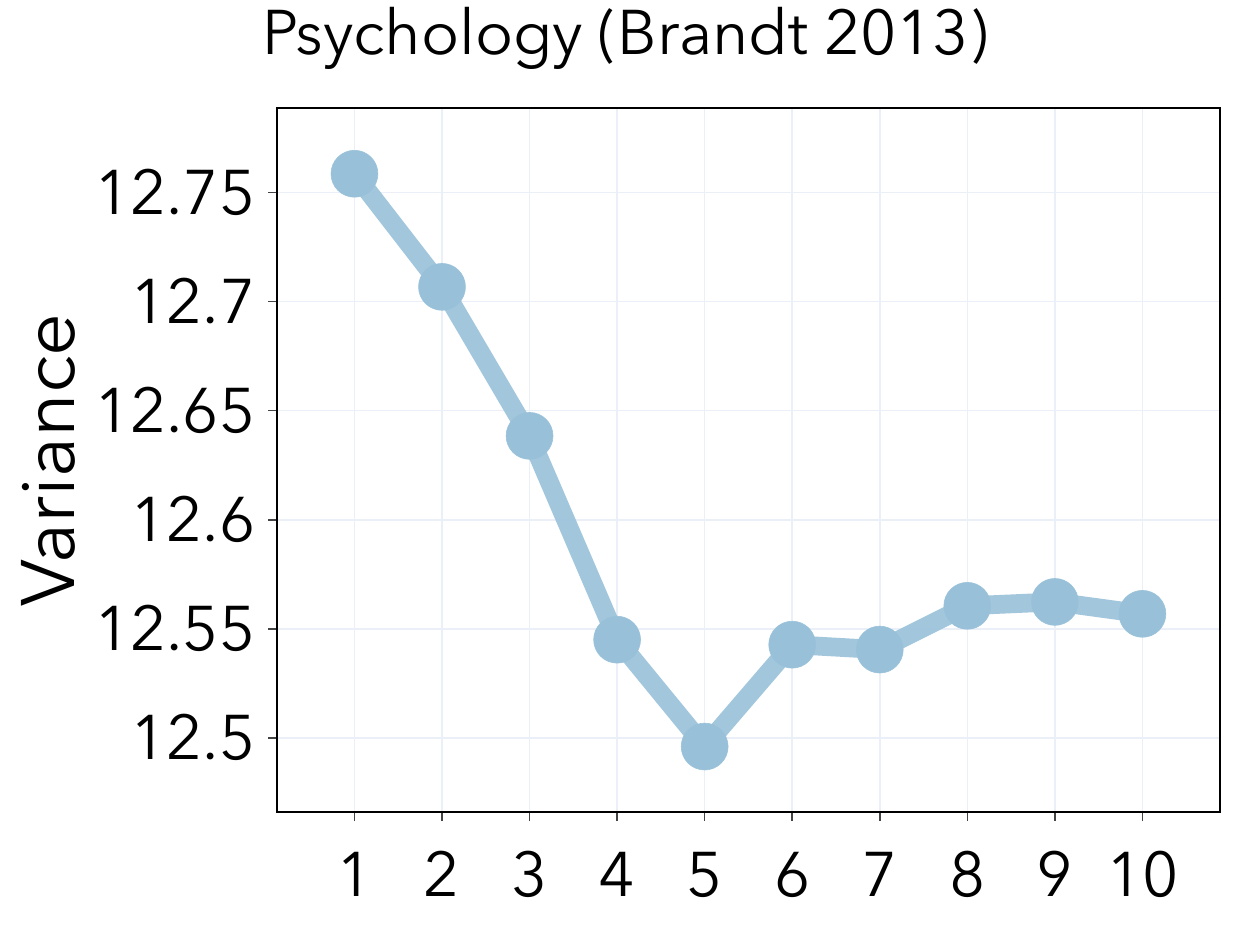}
         \caption{}
        \label{fig:prompt_ci_Brandt}
        \end{subfigure}
         \begin{subfigure}[t]{0.3\textwidth}
\includegraphics[width=\textwidth]{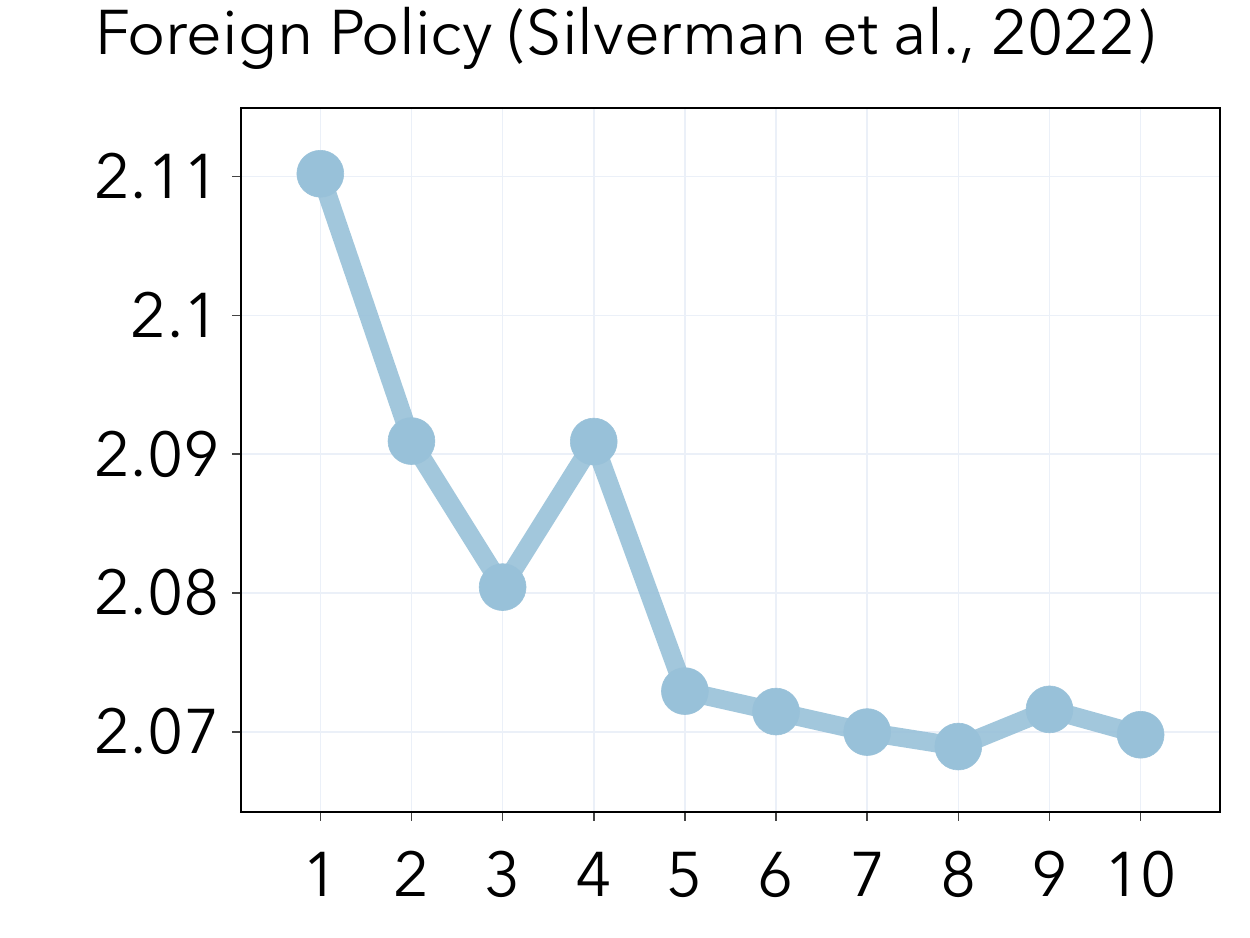}
         \caption{}
    \label{fig:prompt_ci_Silver}
        \end{subfigure}
        \begin{subfigure}[t]{0.3\textwidth}
        \includegraphics[width=\textwidth]{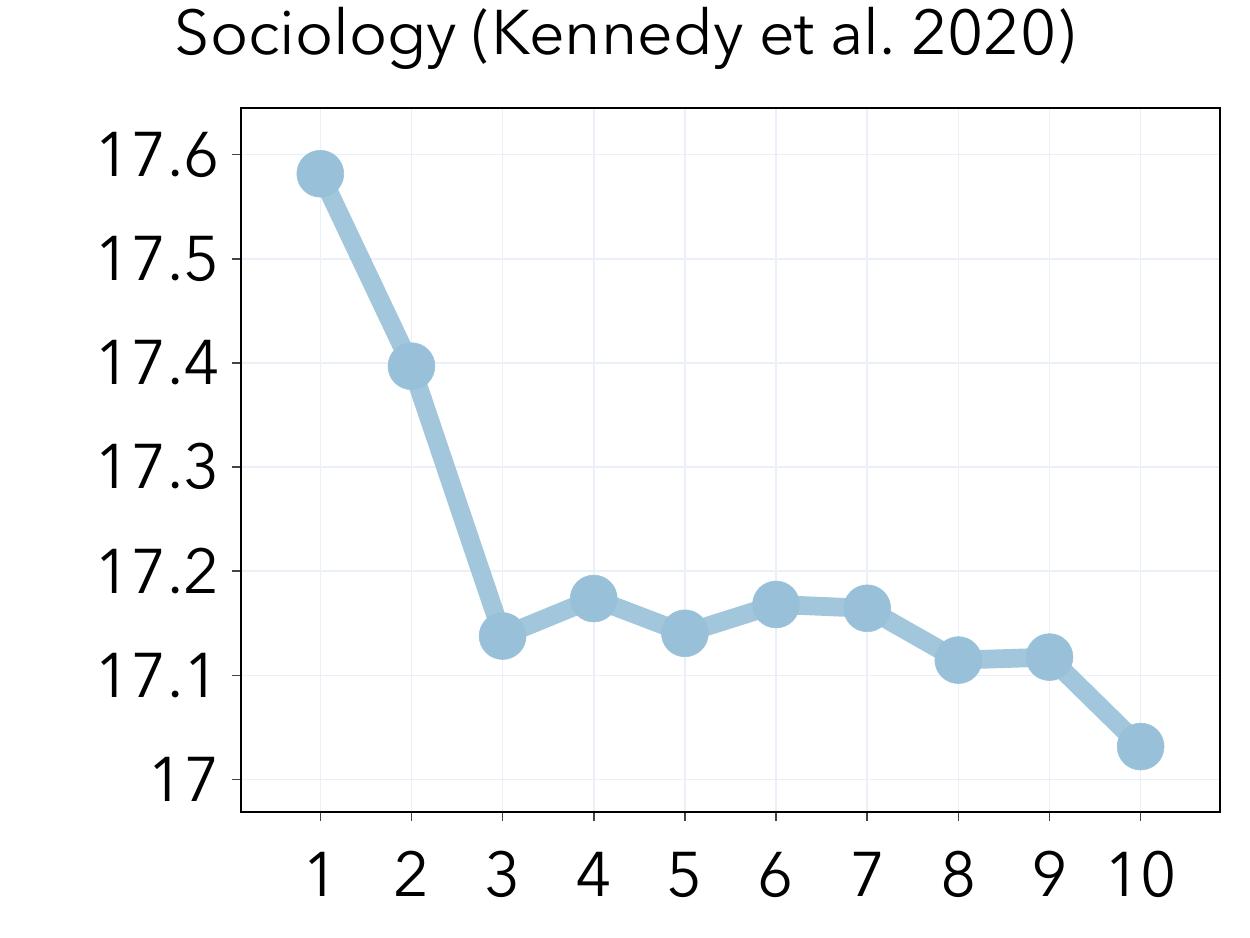}
         \caption{}
    \label{fig:prompt_ci_Ken}
        \end{subfigure}\\
        \begin{subfigure}[t]{0.3\textwidth}
        \includegraphics[width=\textwidth]{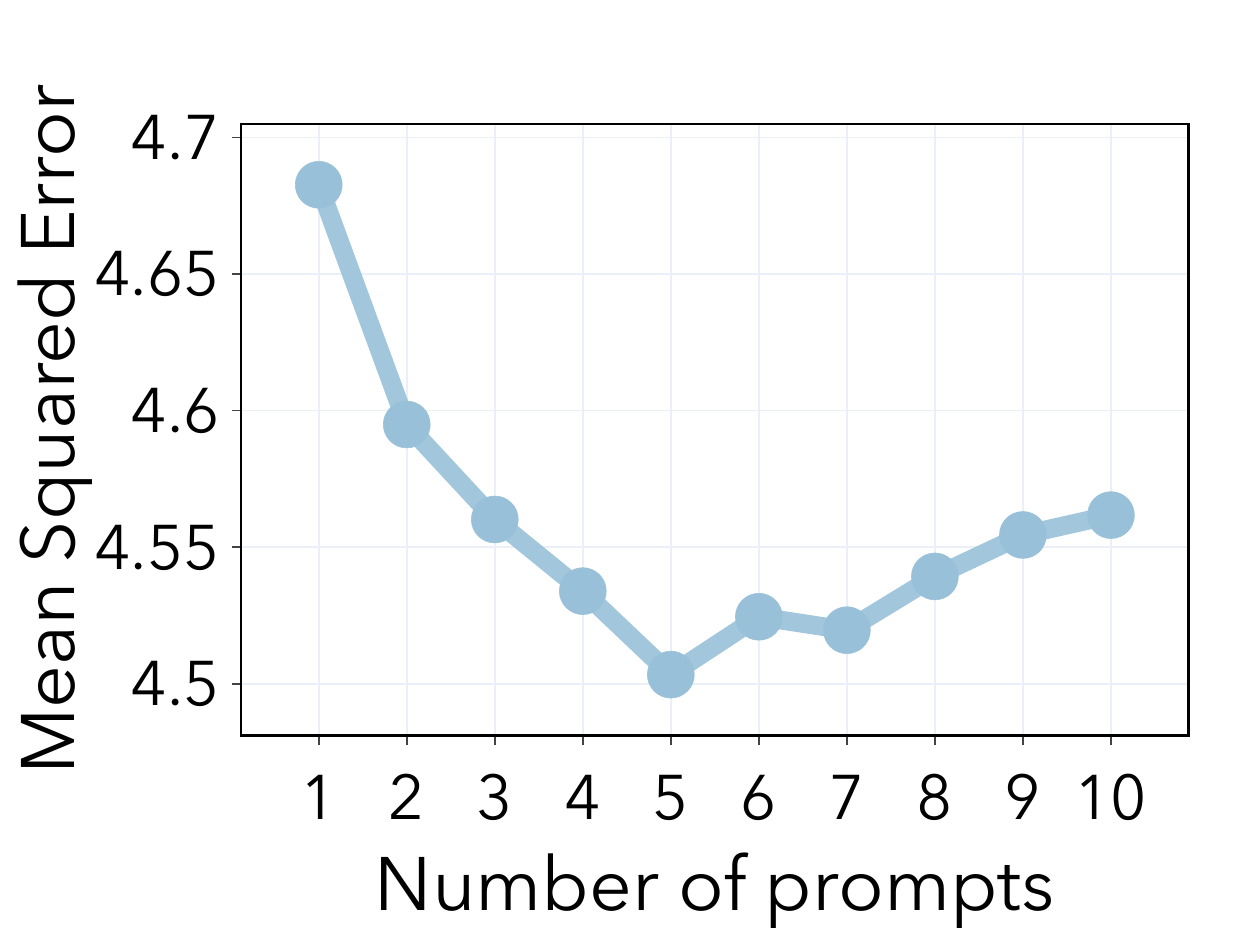}
        \caption{}
        \label{fig:prompt_mse_b}
        \end{subfigure}
        \begin{subfigure}[t]{0.3\textwidth}
        \includegraphics[width=\textwidth]{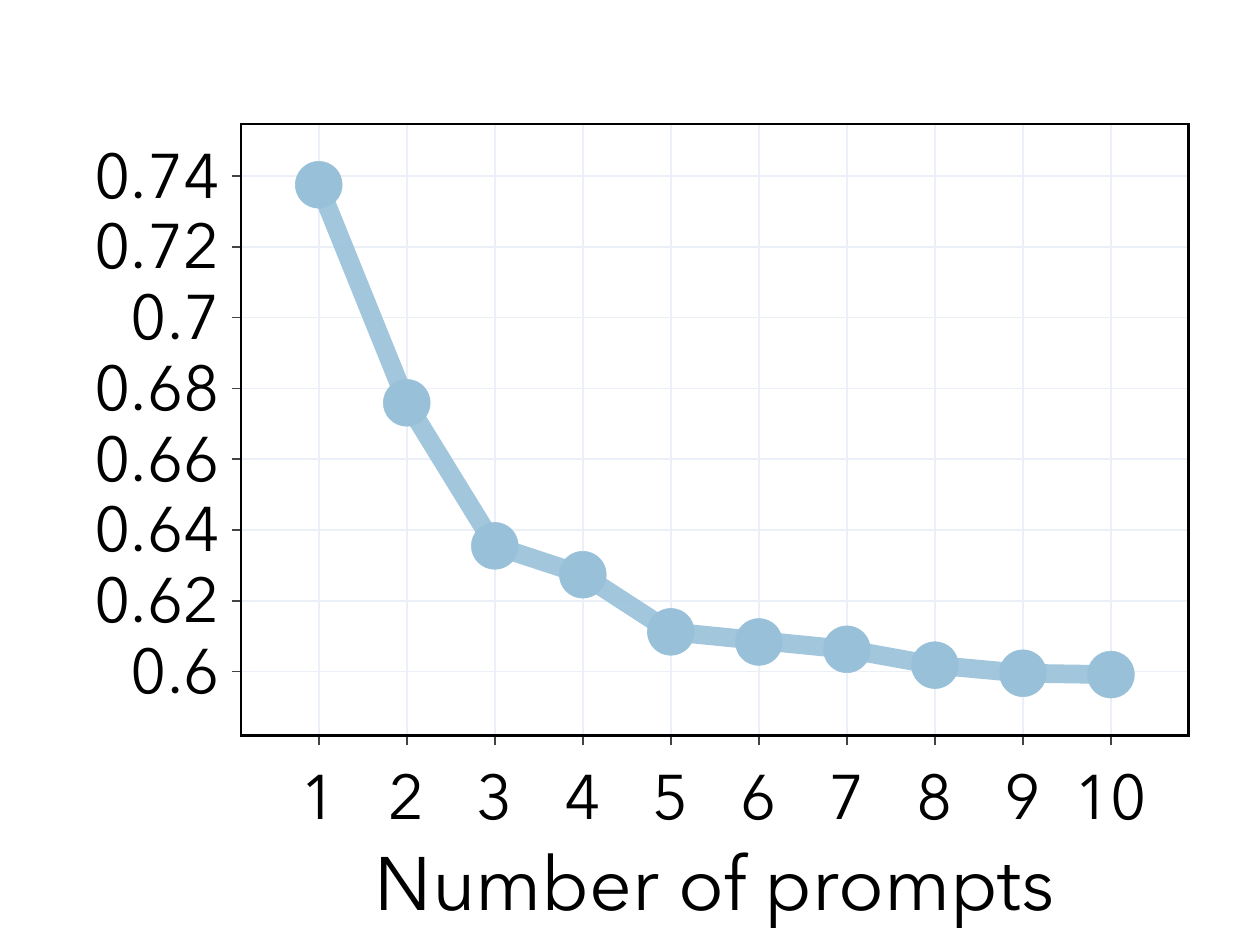}
         \caption{}
            \label{fig:prompt_mse_s}

        \end{subfigure}
        \begin{subfigure}[t]{0.3\textwidth}
        \includegraphics[width=\textwidth]{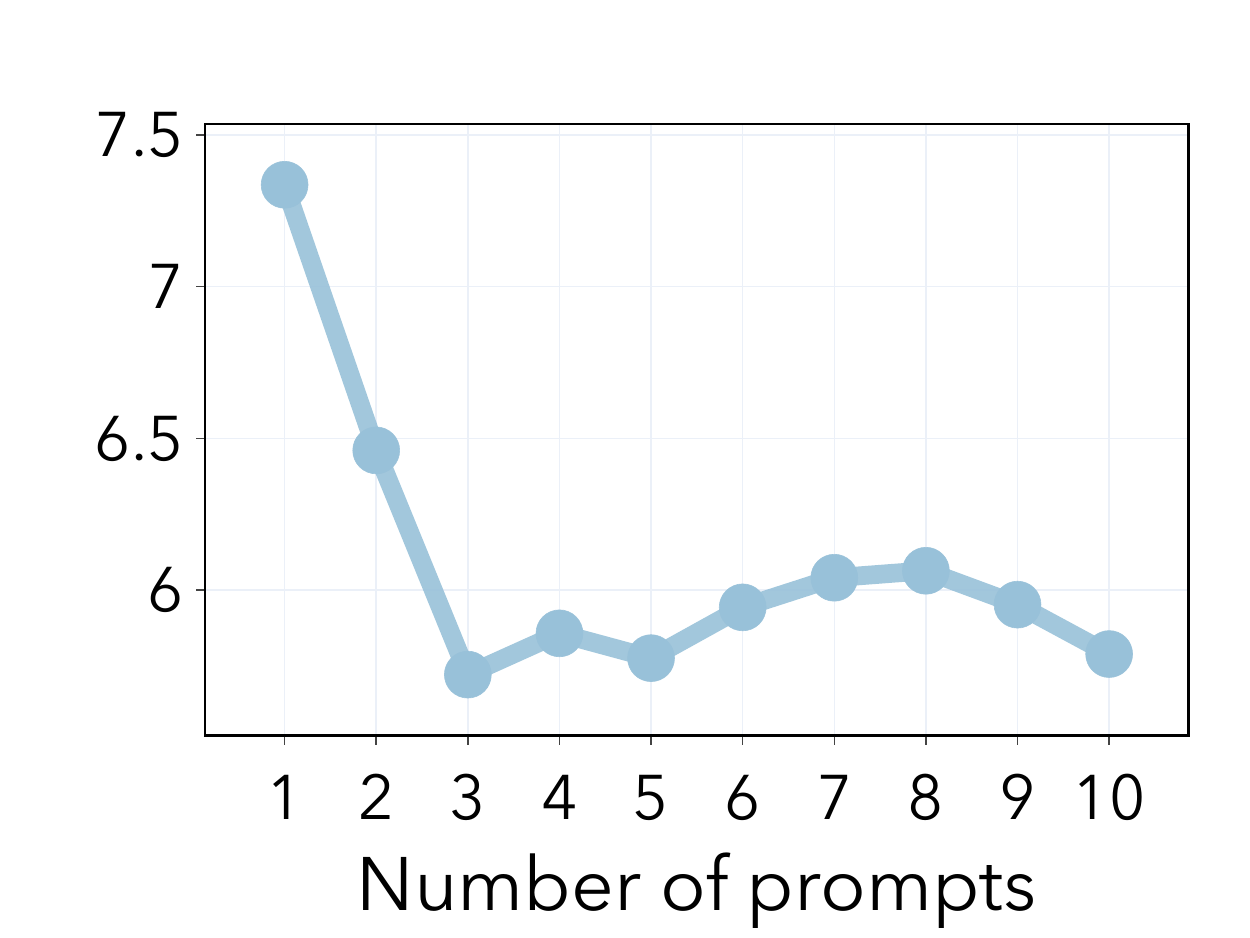}
     \caption{}            \label{fig:prompt_mse_k}

        \end{subfigure}
    
\caption{\small{Impact of the number of prompts on the empirical variance and MSE. Results are reported for studies by \citet{brandt2013onset, silverman2022putting, kennedy2020accidental}. We randomly subsample each study to obtain a sample size $n=50$ and report the average over $R=10k$ repetitions for each metric. \textbf{(First row)} Reduction in variance as the number of prompts increases. \textbf{(Second row)} Reduction in MSE as the number of   prompts increase. These results suggest that increasing inference-time compute improves the precision of $\ours$ by reducing the MSE of the foundation model predictions.}}

    \label{fig:prompts_comparison}
\end{figure}

\newpage
\section{Experimental details}
\label{apx:expdetails}

\subsection{Implementation details}
\label{apx:implementation}

For all experiments, we first select the five features most correlated with the outcome variable. The $\aipw$ estimator implements cross-fitting with 30 folds, using ridge regression with regularization $\lambda=1.0$ in the standard case and XGBoost with default hyperparameters in the boosting case. For $\ppct$, we follow the implementation by \citet{poulet2025prediction}, using GPT-4o's predictions for the control scenario as the prognostic score. We implement \procova~using an \aipw~estimator whose outcome regression estimator is augmented with a smart covariate, i.e. the prediction of GPT-4o for both arms. The correlation coefficients for the optimal combination are computed using standard \texttt{Python} libraries. Finally, the $\dm$ estimator requires no hyperparameter tuning. We compute the ground-truth ATE for all studies using the $\dm$ estimator on the full study with sample size $N$.

\paragraph{Implementation of H-AIPW} 
Our estimator integrates synthetic outcomes generated by multiple LLMs. Unless stated otherwise, we use predictions from LLaMA 3 70B, GPT-4o, and Claude 3.5 Haiku for all experiments in \Cref{sec:main_experiments}. Additional models, such as Gemma 2, Grok 2, and Gemini 1.5 Flash, are used in specific cases, e.g. \Cref{sec:improving_accuracy}. We leverage both proprietary and open-source LLMs. For open-source models, we apply nucleus sampling with a temperature of $1.2$, top-p of $0.9$, and a maximum of 100 new tokens. For proprietary models, we use default decoding settings, except for Claude 3.5 Haiku, where we set the temperature to 1. In summary, $\ours$ extends the classic $\aipw$ estimator by incorporating multiple $\aipw$ estimators that integrate LLM predictions; see \Cref{algo:haipw} for full details.

\paragraph{Reproducing~\Cref{fig:teaser}}
We randomly subsample each study with a sample size of \( n = 75 \) and compute the average confidence interval over 1k repetitions using the \emph{standard} AIPW estimator with  linear regression. Then, we obtain $n_{\ours}$ by progressively reducing \( n \) until the average confidence interval from \(\ours\) (using GPT-4o, LLaMa 3 70B, and Claude 3.5 Haiku) matches or exceeds the standard $\aipw$ confidence interval. The percentage reduction in sample size is then computed as:  $$100\left(1 - \frac{n_{\ours}}{n} \right).$$


\subsection{Preprocessing of scientific studies and prompt design}
\label{apx:prompts}

In this section, we describe the preprocessing steps, selected outcomes, and control and treatment scenarios for the studies used in our experiments. We also provide an example prompt, including both system and user components, used to query the LLMs. The studies are sourced from the Time-sharing Experiments for the Social Sciences (TESS) repository, with findings published in peer-reviewed journals. These studies span various fields, demonstrating the versatility of our methodology.

\subsubsection{Can Factual Misperceptions be Corrected? An Experiment on American Public Fears of Terrorism~\citep{silverman2022putting}}

\textbf{Abstract:} An American’s yearly chance of being killed by a terrorist attack sits at about 1 in 3.5 million. Yet over 40\% of the American public consistently believes that they or their family members are likely to be the victim of a terror attack. Can these inflated estimates of the risks of terrorism be brought closer to reality? With trillions of dollars spent on the War on Terror since 9/11, this question is not just theoretically but practically important. In order to investigate, we field a nationally representative survey experiment containing a brief vignette with corrective information about the actual risks of terrorism vs. other dangers facing Americans. Additionally, we vary whether there is a political elite endorsement accompanying the information, with either a Democratic politician, Republican politician, or senior military officer driving home the message.

\textbf{Data availability:}  The study is publicly available at: \url{https://tessexperiments.org/study/silverman1035}

\textbf{Data pre-processing:} The primary outcome variable is \texttt{Q5}. The treatment condition is defined as $\texttt{P\_TESS031} =1$ (corrective information), and the control condition is defined as $\texttt{P\_TESS031} =0$ (no corrective information). The following variables are included as covariates: \texttt{PARTYID7, IDEO, RELIG, ATTEND, GENDER, AGE,
       RACETHNICITY, EDUC4, INCOME}. The final processed dataset contains $n=503$ observations.

\textbf{Prompting details:} 
An example prompt is provided below.

\begin{tcolorbox}[
   title=Example Prompt,
   fonttitle=\bfseries,
   colback=white,
   colframe=pierCite,
   width=\textwidth,
   left=5pt,
   right=5pt,
   enhanced, 
    breakable
]
\textbf{System Prompt:}
\begin{quotation}
You are a 33-year-old, ethnicity White, gender Male, strong Democrat. You hold very liberal views and college education. Additionally, your religion is Catholic, and you attend religious services nearly every week. Your household has a yearly income of \$75,000 to \$84,999. Your answer must be a single integer without additional text, in JSON format with a key-value pair.
\end{quotation}

\textbf{Treatment Condition:}
\begin{quotation}
The number of people who say that acts of terrorism against Americans are imminent is up 3\% from last year, according to a new poll released this week. In the wake of attacks in San Bernardino, Orlando, Paris, and London, the Pew Research Center found that 63\% of Americans think major terrorist attacks are likely to occur soon on American soil. Government officials have echoed these concerns. ``We are issuing a new advisory that the terror threat is now elevated across the country," said Undersecretary for Homeland Security Stephen Krause. ``We have to remain vigilant and we have to stay alert. Terrorists can strike anytime, anywhere." 

But does terrorism really pose a critical threat to us? Below is a figure showing the average American's risk of death from different sources. 
As can be seen, around 90 Americans are killed each year by terrorism on U.S. soil. This means the risk of being a victim of terrorism in a given year is about 1 in 3.5 million. In comparison, the risk of being killed by cancer is 1 in 540, the risk of being killed in a car accident is 1 in 8,000, and the chance of being killed by your own home appliances is 1 in 1.5 million. These numbers provide some essential context when thinking about the different threats to our public safety.
\end{quotation}

\textbf{Control Condition:}
\begin{quotation}
The number of people who say that acts of terrorism against Americans are imminent is up 3\% from last year, according to a new poll released this week. In the wake of attacks in San Bernardino, Orlando, Paris, and London, the Pew Research Center found that 63\% of Americans think major terrorist attacks are likely to occur soon on American soil. Government officials have echoed these concerns. ``We are issuing a new advisory that the terror threat is now elevated across the country," said Undersecretary for Homeland Security Stephen Krause. ``We have to remain vigilant and we have to stay alert. Terrorists can strike anytime, anywhere."
\end{quotation}

\textbf{Question:}
\begin{quotation}
How likely do you think it is that another terrorist attack causing large numbers of American lives to be lost will happen in the near future? Choose an integer between 1 (very likely) and 5 (not likely at all).
\end{quotation}
\end{tcolorbox}

\subsubsection{Cancel Culture for Friends, Consequence Culture for Enemies: The Effects of Ideological Congruence on Perceptions of Free Speech~\citep{fahey2023principled}}

\textbf{Abstract:} Political scientists have long been interested in the effects that media framings have on support or tolerance for controversial speech. In recent years, the concept of cancel culture has complicated our understanding of free speech. In particular, the modern Republican Party under Donald Trump has made ``fighting cancel culture'' a cornerstone of its electoral strategy. We expect that when extremist groups invoke cancel culture as a reason for their alleged censorship, support for their free speech rights among Republicans should increase. We use a nationally representative survey experiment to assess whether individuals’ opposition to cancel culture is principled or contingent on the ideological identity of the speaker. We show that framing free speech restrictions as the consequence of cancel culture does not increase support for free speech among Republicans. Further, when left-wing groups utilize the cancel culture framing, Republicans become even less supportive of those groups’ free speech rights.

\textbf{Data availability:}  The study is publicly available at: \url{https://www.tessexperiments.org/study/faheyS78}

\textbf{Data pre-processing:} The primary outcome variable is \texttt{CC\_1}. The treatment condition is defined as $\texttt{P\_GROUP} =2$ (safety reasons + cancel culture), and the control condition as $\texttt{P\_GROUP} =1$ (safety reasons). The following variables are included as covariates: \texttt{PARTYID7, IDEO, RELIG, ATTEND, GENDER, AGE, HOME\_TYPE, INCOME}. The final processed dataset contains $n=998$ observations.

\textbf{Prompting details:} 
An example prompt is provided below.

\begin{tcolorbox}[
   title=Example Prompt,
   fonttitle=\bfseries,
   colback=white,
   colframe=pierCite,
   width=\textwidth,
   left=5pt,
   right=5pt,
   enhanced,
   breakable
]
\textbf{System Prompt:}
\begin{quotation}
You are a 35-year-old male, politically Democrat, holding liberal views. Additionally, your religion is Christianity, and you once or twice a month attend religious services. You reside in a building with two or more apartments, and your household has a yearly income of \$85,000 to \$99,999. You are responding to a scenario reflecting a debate involving college campus events and broader social issues. 
\end{quotation}

\textbf{Treatment Condition:}
\begin{quotation}
We are now going to ask you to imagine you have read about the following scenario, describing a debate on a recent College Campus. 

\textbf{Local Group Denied Permit to Protest on Campus, Provoking Debate About “Cancel Culture”}

A debate on the merits of free speech erupted recently when the student chapter of the controversial far-left group Antifa attempted to obtain a permit to conduct a demonstration on the main quad of Rutgers University in New Jersey. Citing safety concerns, the president of the organization in charge of Registered Student Organizations (RSOs) initially denied the organization the right to conduct their rally, arguing that their presence would endanger college students. They cited a recent incident in Berkeley, CA where three Antifa members and two bystanders were injured by rocks thrown in an altercation between the group and counter protesters. A member of the local Antifa group, Luke Vargas, is appealing the decision, arguing that the permit denial represented "cancel culture run amok," and the University was simply "afraid to hear the truth." When asked to comment, the University Ombudsman's Office promised that a final decision on whether the rally would be permitted would be made by this Thursday, three days before the march is scheduled to take place on Sunday.
\end{quotation}

\textbf{Control Condition:}
\begin{quotation}
We are now going to ask you to imagine you have read about the following scenario, describing a debate on a recent College Campus. 

\textbf{Local Group Denied Permit to Protest on Campus}

A debate on the merits of free speech erupted recently when the student chapter of the controversial far-left group Antifa attempted to obtain a permit to conduct a demonstration on the main quad of Rutgers University in New Jersey. Citing safety concerns, the president of the organization in charge of Registered Student Organizations (RSOs) initially denied the organization the right to conduct their rally, arguing that their presence would endanger college students. They cited a recent incident in Berkeley, CA where three Antifa members and two bystanders were injured by rocks thrown in an altercation between the group and counter protesters. A member of the local Antifa group, Luke Vargas, promised to bring an appeal to the desk of the University President. When asked to comment, the University Ombudsman's Office promised that a final decision on whether the rally would be permitted would be made by this Thursday, three days before the march is scheduled to take place on Sunday.
\end{quotation}

\textbf{Question:}
\begin{quotation}
Generally speaking, do you agree or disagree with the following statement:
``Cancel culture is a big problem in today’s society.''
Reply using numbers between 1 (definitely agree) and 5 (definitely disagree).
\end{quotation}
\end{tcolorbox}

\subsubsection{Beliefs about Racial Discrimination~\citep{haaland2023beliefs}} 

\textbf{Abstract:} This paper provides representative evidence on beliefs about racial discrimination and examines whether information causally affects support for pro-black policies. Eliciting quantitative beliefs about the extent of hiring discrimination against blacks, we uncover large disagreement about the extent of racial discrimination with particularly pronounced partisan differences. An information treatment leads to a convergence in beliefs about racial discrimination but does not lead to a similar convergence in support of pro-black policies. The results demonstrate that while providing information can substantially reduce disagreement about the extent of racial discrimination, it is not sufficient to reduce disagreement about pro-black policies.

\textbf{Data availability:}  The study is publicly available at: \url{https://www.tessexperiments.org/study/Haaland874}

\textbf{Data pre-processing:} The primary outcome variable is \texttt{Q2}. The treatment condition is defined as \texttt{GROUP = 1} (statistics of white-sounding and black-sounding names), and the control condition is defined as \texttt{GROUP = 2} (statistics of white-sounding names). The following variables are included as covariates: \texttt{PartyID7}, \texttt{INCOME}, \texttt{ATTEND}, \texttt{RELIG}, \texttt{GENDER}, \texttt{AGE}, \texttt{REGION9}, \texttt{RACETHNICITY}. The final processed dataset contains $n=1539$ observations.

\textbf{Prompting details:} 
An example prompt is provided below.

\begin{tcolorbox}[
    title=Example Prompt ,
    width=\textwidth,
    colback=white,
    colframe=pierCite,
    left=5pt,
    right=5pt,
    top=5pt,
    bottom=5pt,
    breakable=true,
    enhanced
]
\textbf{System Prompt:}
\begin{quotation}
You are a 60-year-old, politically Independent, gender Female, ethnicity Hispanic. Additionally, your religion is just Christian and you never attend religious services. You live in a state of the West South Central region. Your household has a yearly income of \$30,000 to \$34,999. You are responding to a survey experiment collecting data on people's beliefs about racial discrimination and whether these beliefs affect people's views on affirmative action policies.
\end{quotation}

\textbf{Treatment condition:}
\begin{quotation}
Researchers from Harvard University conducted an experiment to study racial discrimination in the labor market. They did so by sending out fictitious resumes to help-wanted ads in Boston newspapers.
The resumes were exactly the same except for one thing: the name of the job applicant. Half of the resumes had typically white-sounding names like “Carrie” and “Todd”. The other half of the resumes had typically black-sounding names like “Tanisha” and ``Kareem''. The idea was to make sure that the applicants were seen as having identical qualifications, but that the employers would use the applicants’ names to infer whether they were white or black.
Resumes with white-sounding names had to be sent out on average 10 times to get one callback for an interview. 

Further, the researchers found that resumes with black-sounding names on average had to be sent out 15 times to get one callback for an interview. Since resumes with white-sounding names on average only had to be sent out 10 times to get one callback for an interview, this means that employers were 50 percent more likely to give callbacks to applicants with white-sounding names compared to applicants with black-sounding names.  
\end{quotation}

\textbf{Control condition:}
\begin{quotation}
Researchers from Harvard University conducted an experiment to study racial discrimination in the labor market. They did so by sending out fictitious resumes to help-wanted ads in Boston newspapers.
The resumes were exactly the same except for one thing: the name of the job applicant. Half of the resumes had typically white-sounding names like “Carrie” and “Todd”. The other half of the resumes had typically black-sounding names like “Tanisha” and “Kareem”. The idea was to make sure that the applicants were seen as having identical qualifications, but that the employers would use the applicants’ names to infer whether they were white or black.
Resumes with white-sounding names had to be sent out on average 10 times to get one callback for an interview. 
\end{quotation}

\textbf{Question:}
\begin{quotation}
In the United States today, do you think that racial discrimination against blacks in the labor market is a serious problem?
Reply with a JSON numerical answer using one of these numbers:
1 (A very serious problem), 2 (A serious problem), 3 (A problem), 4 (A small problem), or 5 (Not a problem at all).
\end{quotation}
\end{tcolorbox}

\subsubsection{Accidental Environmentalists: Examining the Effect of Income on Positive Social Evaluations of Environmentally-Friendly Lifestyles~\citep{kennedy2020accidental}}

\textbf{Abstract:} Many US households have adopted behaviors aimed at reducing their environmental impact. Existing scholarship examines antecedent variables predicting engagement in these pro-environmental behaviors. But little research examines the effect of making efforts to reduce environmental impact on positive evaluations. Based on our qualitative pilot data, we suspect that income may be an important factor in the extent to which green lifestyles earn social approval. We predict that a household that reduces its environmental impact will be viewed more positively if that household has a high (rather than low) income. We manipulate household income (high vs low) and proenvironmental behavior (green vs typical). We then measure participants' approval of the household, how socially close they feel to the household, as well as their evaluations of the household's competence, morality, and environmental commitment. This research allows us to identify the bases for social approval of green lifestyles and examine how social approval for a household's green lifestyle varies with that household's income.

\textbf{Data availability:}  The study is publicly available at: \url{https://tessexperiments.org/study/kennedy1017}

\textbf{Data pre-processing:} The primary outcome variable is \texttt{Q5}. The treatment condition is defined as \texttt{P\_TESS23 = 4} (green lifestyle), and the control condition is defined as \texttt{P\_TESS23 = 2} (typical lifestyle). The following variables are included as covariates: \texttt{PartyID7}, \texttt{IDEO}, \texttt{ATTEND}, \texttt{GENDER}, \texttt{AGE}. The final processed dataset contains $n=1276$ observations.

\textbf{Prompting details:} 
An example prompt is provided below.

\begin{tcolorbox}[
    title=Example Prompt ,
    width=\textwidth,
    colback=white,
    colframe=pierCite,
    left=5pt,
    right=5pt,
    top=5pt,
    bottom=5pt,
    enhanced,
    breakable
]
\textbf{System Prompt:}
\begin{quotation}
You are a 45-year-old, lean Democrat, gender Female, and hold slightly conservative views. Additionally, you attend religious services several times a year. We are going to give you some information about a family. Please read the information very carefully, as we will be asking you questions about it. Your answer must be in JSON format with a single key-value pair.
\end{quotation}

\textbf{Treatment condition:}
\begin{quotation}
A family with two children lives in a neighborhood nearby to yours. You chat with them sometimes when you see them in the neighborhood. As far as you can tell, they make a huge amount of money and seem to have plenty of extra money to spend. Their house is small and they often take public transit or walk to avoid driving. They also dry their clothes on a clothesline and don't have air conditioning in their home. This family has a much lower environmental impact than other people in their neighborhood.
\end{quotation}

\textbf{Control condition:}
\begin{quotation}
A family with two children lives in a neighborhood nearby to yours. You chat with them sometimes when you see them in the neighborhood. As far as you can tell, they make very little money and seem to have no extra money to spend. Their house is small and they often take public transit or walk to avoid driving. They also dry their clothes on a clothesline and don't have air conditioning in their home. This family has a much lower environmental impact than other people in their neighborhood.
\end{quotation}

\textbf{Question:}
\begin{quotation}
How much is the environment a high priority for this family? Choose an integer between 1 (not at all) and 11 (very much).
\end{quotation}
\end{tcolorbox}

\subsubsection{To Do, to Have, or to Share? Valuing Experiences and Material Possessions by Involving Others~\citep{caprariello2013have}} 

\textbf{Abstract:} Recent evidence indicates that spending discretionary money with the intention of acquiring life experiences-events that one lives through-makes people happier than spending money with the intention of acquiring material possessions-tangible objects that one obtains and possesses. We propose and show that experiences are more likely to be shared with others, whereas material possessions are more prone to solitary use and that this distinction may account for their differential effects on happiness. In 4 studies, we present evidence demonstrating that the inclusion of others is a key dimension of how people derive happiness from discretionary spending. These studies showed that when the social-solitary and experiential-material dimensions were considered simultaneously, social discretionary spending was favored over solitary discretionary spending, whereas experiences showed no happiness-producing advantage relative to possessions. Furthermore, whereas spending money on socially shared experiences was valued more than spending money on either experiences enacted alone or material possessions, solitary experiences were no more valued than material possessions. Together, these results extend and clarify the basic findings of prior research and add to growing evidence that the social context of experiences is critical for their effects on happiness.

\textbf{Data availability:}  The study is publicly available at: \url{https://www.tessexperiments.org/study/caprariello130}

\textbf{Data pre-processing:} The primary outcome variable is \texttt{Q7A}. The treatment condition is defined as \texttt{XTESS086 = 1} (spend money with people), and the control condition is defined as \texttt{XTESS086 = 2} (spend money alone). The following variables are included as covariates: \texttt{XPARTY7}, \texttt{XREL1}, \texttt{XREL2}, \texttt{XIDEO}, \texttt{PPAGE}, \texttt{PPGENDER}. The final processed dataset contains $n=397$ observations.

\textbf{Prompting details:} 
An example prompt is provided below.

\begin{tcolorbox}[
    title=Example Prompt ,
    width=\textwidth,
    colback=white,
    colframe=pierCite,
    left=5pt,
    right=5pt,
    top=5pt,
    bottom=5pt,
    enhanced,
    breakable
]
\textbf{System Prompt:}
\begin{quotation}
You are a 53-year-old, not so strong Republican, gender Male, and hold moderate views. Additionally, regarding religion you are Buddhist and you more than once a week attend religious services. You are responding to a survey on how you spend your discretionary money. Your answer must be a single integer without additional text, in JSON format with a key-value pair.
\end{quotation}

\textbf{Treatment condition:}
\begin{quotation}
We are interested in ways you spend your discretionary money. Discretionary money refers to money that is spent on anything that is NOT essential to basic activity (that is, essentials refer to things like tuition and textbooks, groceries, transportation, rent, gas for a car, health care, etc.). We'd like you to answer the questions that follow for money that you spent on something discretionary. Please think of the last time you spent at least \$10 (but no more than \$10,000) of your discretionary money in order TO DO SOMETHING WITH AT LEAST ONE OTHER PERSON. The primary focus of this expense should have been on an activity – doing something with at least one other person – and not on buying something that could be kept. Maybe you bought tickets to see a movie with some people, maybe you paid to visit an art museum with friends, maybe you and some other people went to a spa together … any of these would be legitimate examples of spending money to do something with others.
\end{quotation}

\textbf{Control condition:}
\begin{quotation}
We are interested in ways you spend your discretionary money. Discretionary money refers to money that is spent on anything that is NOT essential to basic activity (that is, essentials refer to things like tuition and textbooks, groceries, transportation, rent, gas for a car, health care, etc.). We'd like you to answer the questions that follow for money that you spent on something discretionary. Please think of the last time you spent at least \$10 (but no more than \$10,000) of your discretionary money in order TO DO SOMETHING BY YOURSELF. The primary focus of this expense should have been on an activity – doing something by yourself – and not on buying something that could be kept. Maybe you bought a ticket to see a movie by yourself, maybe you paid to enter an art museum, maybe you went to a spa by yourself … any of these would be legitimate examples of spending money to do something by yourself.
\end{quotation}

\textbf{Question:}
\begin{quotation}
Think about the last time you used your possession. To what extent did it help you feel loved and cared about?
Reply with a JSON numerical answer using one of these numbers:
1 (not at all), 2 (slightly), 3 (moderately), 4 (very), or 5 (extremely).
\end{quotation}
\end{tcolorbox}

\subsubsection{Onset and Offset Controllability in Perceptions and Reactions to Home Mortgage Foreclosures~\citep{brandt2013onset}} 

\textbf{Abstract:} The circumstances and rhetoric surrounding home foreclosures provide an ideal and timely backdrop for an extension of research on attributional judgments. While people face foreclosure for many reasons, the current debate surrounding the mortgage crisis has highlighted reasons that are either onset or offset controllable; that is, the initial cause, or the subsequent solution may be seen as controllable.In the current study, I examine how people use attributional evidence from multiple time points to determine affective reactions and helping intentions for people undergoing foreclosure, as well as ideological differences in these attributional processes. Participants read about people who were undergoing foreclosure for onset and offset controllable or uncontrollable reasons and then answer questions about their perceptions of these targets. The results suggested that both onset and offset controllable information contributed to the emotional reactions and helping intentions of the participants with the participants experiencing more negative affect and less helping intentions when the target was in a controllable onset or offset situation. Conservatives primarily relied on onset controllability information to decide who should receive government aid, while liberals updated their initial attributions with offset controllability information.

\textbf{Data availability:}  The study is publicly available at: \url{https://www.tessexperiments.org/study/brandt708}

\textbf{Data pre-processing:} The primary outcome variable is \texttt{Q7}. The treatment condition is defined as \texttt{XTESS003 = 1} (family can afford the mortgage), and the control condition is defined as \texttt{XTESS003 = 2} (family might not afford the mortgage). The following variables are included as covariates: \texttt{XPARTY7}, \texttt{XREL1}, \texttt{XREL2}, \texttt{PPAGE}, \texttt{PPGENDER}. The final processed dataset contains $n=624$ observations.

\textbf{Prompting details:} 
An example prompt is provided below.

\begin{tcolorbox}[
    title=Example Prompt ,
    width=\textwidth,
    colback=white,
    colframe=pierCite,
    left=5pt,
    right=5pt,
    top=5pt,
    bottom=5pt,
    enhanced,
    breakable
]
\textbf{System Prompt:}
\begin{quotation}
You are a 75-year-old, not so strong Democrat, gender Female. Additionally, regarding religion you are a Muslim and you once a week attend religious services. You are responding to a survey on perceptions towards people who are facing foreclosure. Your answer must be a single integer without additional text, in JSON format with a key-value pair.
\end{quotation}

\textbf{Treatment condition:}
\begin{quotation}
Recently the growing number of home foreclosures has put a strain on the financial system, which has weakened the United States economy. Foreclosure occurs when a person is behind on home mortgage payments to their bank and the bank decides to repossess (i.e., take back) the home. People may go into foreclosure for a variety of reasons.
We are interested in your perceptions towards people who are facing foreclosure. In the following section you will be presented with a situation that describes some people facing foreclosure. Please carefully read the situation and answer the following questions about your reactions to the situation.
Some people have a large monthly mortgage payment because they wanted to purchase a larger house than they needed. Now they are facing foreclosure because they do not want to continue paying the mortgage, even though they are able to afford the payments.
\end{quotation}

\textbf{Control condition:}
\begin{quotation}
Recently the growing number of home foreclosures has put a strain on the financial system, which has weakened the United States economy. Foreclosure occurs when a person is behind on home mortgage payments to their bank and the bank decides to repossess (i.e., take back) the home. People may go into foreclosure for a variety of reasons.
We are interested in your perceptions towards people who are facing foreclosure. In the following section you will be presented with a situation that describes some people facing foreclosure. Please carefully read the situation and answer the following questions about your reactions to the situation.
Some people have a large monthly mortgage payment because they wanted to purchase a larger house than they needed. Now they are facing foreclosure because the primary income earner in the household lost their job due to their company closing and they can no longer afford payments.
\end{quotation}

\textbf{Question:}
\begin{quotation}
Do you strongly oppose or strongly support the following statement: The government should offer help (e.g., time, money, resources, etc.) in an effort to help people in this situation.
Reply with an integer from 1 (Strongly Oppose) to 7 (Strongly Support), where 4 is a Neutral stance.
\end{quotation}
\end{tcolorbox}

\subsubsection{Testing a Theory of Hybrid Femininity
~\citep{melin2022women}} 

\textbf{Abstract:} Although men experience advantages working in highly feminized occupations, they are commonly stigmatized as lesser men by outsiders—the people they meet outside of their occupations—for doing ``women’s work.'' This experiment is designed to assess whether a woman who has worked in a hypermasculine occupation would similarly be stigmatized as a lesser woman by workers outside of her hypermasculine occupation, or alternatively, whether she would be viewed more favorably by such outsiders for doing ``men’s work.'' Specifically, this study aims to develop and empirically test a theory of hybrid femininity, which specifies the conditions under which hypermasculinity as signaled through occupation creates status and reward distinctions among women in external labor markets. The experiment asks respondents to provide recommended compensation and status ratings for a woman candidate while manipulating the gender-typing of her occupational history as well as her intended target job. By disentangling the underlying mechanisms driving these predicted status and reward differences, this study seeks to shed light on how gender inequality persists, even among women, through the privileging of masculinity over femininity, with important implications for the labor market and society at large.

\textbf{Data availability:}  The study is publicly available at: \url{https://www.tessexperiments.org/study/melin1066}

\textbf{Data pre-processing:} The primary outcome variable is \texttt{Q7\_1}. The treatment condition is defined as \texttt{P\_41 = 3} (applicant has experience in the Army), and the control condition is defined as \texttt{P\_41 = 6} (applicant has experience in the Cosmetics industry). The following variables are included as covariates: \texttt{P\_IDEO}, \texttt{P\_ATTEND}, \texttt{P\_RELIG}, \texttt{RELIG}, \texttt{GENDER}, \texttt{AGE}, \texttt{REGION9}, \texttt{RACETHNICITY}, \texttt{INCOME}, \texttt{P\_PARTYID}. The final processed dataset contains $n=545$ observations.

\textbf{Prompting details:} 
An example prompt is provided below.

\begin{tcolorbox}[
    title=Example Prompt ,
    width=\textwidth,
    colback=white,
    colframe=pierCite,
    left=5pt,
    right=5pt,
    top=5pt,
    bottom=5pt,
    enhanced,
    breakable
]
\textbf{System Prompt:}
\begin{quotation}
You are a 30-year-old, politically Independent, gender Male, ethnicity Hispanic. Your ideology is slightly liberal. Additionally, your religion is Protestant and you about once a month attend religious services. You live in a state of the Pacific region. Your household has a yearly income of \$85,000 to \$99,999. This task is part of a larger study on the design of Human Resources (HR) recruiting practices to pre-screen job applicants. Your answer must be a single integer without additional text, in JSON format with a key-value pair.
\end{quotation}

\textbf{Treatment condition:}
\begin{quotation}
Please imagine you work for a prominent management consulting company. You will be provided with a job description and an applicant’s résumé who is applying for a Senior Manager position. After thoroughly reviewing the job description and the applicant’s résumé, you will be asked to provide your immediate and uncensored opinion. Job description for your review:


Senior Manager (Consulting)
Responsible for: 
- Leading high performance project teams across the organization
- Building professional relationships with key stakeholders
- Defining project objectives, roadmaps, and deliverables 
- Aligning project tactics with project strategy for all new services
The successful applicant will be hard-working, results-oriented, and a team player.
Required Qualifications:
• Bachelor’s degree in Business Administration or a related field 
• 3-5 years of related experience
• Comfort with travel regionally or globally (up to 30\% of time)
• Self-motivated with potential for leadership
• Excellent communication skills
• Solid computer skills, including Microsoft software products 

Applicant’s résumé for your review:

Name: Amy Decker
Motivated Project Manager with 5 years of experience working in military and defense.
Education:
Rutgers University (New Brunswick, NJ), May 2017 (Graduated)
B.A. in Business Administration, GPA: 3.72/4.00
Work Experience:
U.S. Army Project Manager (Active-duty Enlisted), 2014 - Present
Fort Dix Military Base (Fort Dix, NJ)
- Plan and track progress of entire life-cycle of military and defense projects.
- Build and maintain project plans, including actual and forecasted activities and timelines.
- Ensure project staffing and timely communications throughout project lifecycle.
- Identify and manage project risks.
Skills and Interests:
Computer: Proficient in Microsoft Office (including Word, Excel, Outlook, and PowerPoint).
Interests: Running and traveling.
\end{quotation}

\textbf{Control condition:}
\begin{quotation}
Please imagine you work for a prominent management consulting company. You will be provided with a job description and an applicant’s résumé who is applying for a Senior Manager position. After thoroughly reviewing the job description and the applicant’s résumé, you will be asked to provide your immediate and uncensored opinion. Job description for your review:

\texttt{[Job description, same as above]}

Applicant’s résumé for your review:

Name: Amy Decker
Motivated Project Manager with 5 years of experience working in military and defense.
Education:
Rutgers University (New Brunswick, NJ), May 2017 (Graduated)
B.A. in Business Administration, GPA: 3.72/4.00
Work Experience
Cosmetics Project Manager  2014 - Present
Precious Cosmetics (Lodi, NJ)
- Plan and track progress of entire life-cycle of cosmetics and beauty product projects.
- Build and maintain project plans, including actual and forecasted activities and timelines.
- Ensure project staffing and timely communications throughout project lifecycle.
- Identify and manage project risks.
Skills and Interests:
Computer: Proficient in Microsoft Office (including Word, Excel, Outlook, and PowerPoint).
Interests: Running and traveling.
\end{quotation}

\textbf{Question:}
\begin{quotation}
On a scale from 1 ``Not at all'' to 7 ``Extremely'', to what extent do you perceive this applicant as MASCULINE.
\end{quotation}
\end{tcolorbox}

\subsubsection{Understanding White Identity Management in a Changing America~\citep{shuman2024defend}} 

\textbf{Abstract:} This paper examines how White Americans manage their identity amidst societal shifts using a new measure of advantaged identity management, representative data (N = 2648), and latent profile analysis. The findings reveal five subgroups of White Americans, each managing their identity differently. Four profiles correspond to the main advantaged identity management strategies (defend, deny, distance, dismantle), with a fifth using strategies flexibly. Of 15 predictions regarding how valuing hierarchy, meritocracy, and egalitarianism predict profile membership, 13 were supported. These profiles show contrasting attitudes toward social change, with defender-deniers opposing, denier-distancers moderately opposing, distancers remaining neutral, and dismantlers supporting change. These findings provide some of the first empirical evidence for a theorized model of white identity management and suggest that how White Americans manage their identity has important implications for social change.

\textbf{Data availability:}  The study is publicly available at: \url{https://www.tessexperiments.org/study/shuman1643}

\textbf{Data pre-processing:} The primary outcome variable is \texttt{Q5D}. The treatment condition is defined as \texttt{RND\_01 = 1} (disadvantage black people), and the control condition is defined as \texttt{RND\_01 = 0} (advantage white people). The following variables are included as covariates: \texttt{AGE}, \texttt{GENDER}, \texttt{RACETHNICITY}, \texttt{EDUC5}, \texttt{REGION9}, \texttt{IDEO}, \texttt{PartyID7}, \texttt{RELIG}, \texttt{ATTEND}, \texttt{INCOME}. The final processed dataset contains $n=1623$ observations.

\textbf{Prompting details:} 
An example prompt is provided below.

\begin{tcolorbox}[
    title=Example Prompt ,
    width=\textwidth,
    colback=white,
    colframe=pierCite,
    left=5pt,
    right=5pt,
    top=5pt,
    bottom=5pt,
    enhanced,
    breakable
]
\textbf{System Prompt:}
\begin{quotation}
You are a 41-year-old individual with gender Male, ethnicity Asian, and with Bachelor's degree education. You live in a state of the New England region. You hold  Moderate views and are not so strong Democrat. Additionally, your religion is Atheist and you attend religious services never. Your household has a yearly income of \$175,000 to \$199,999.
\end{quotation}

\textbf{Treatment condition:}
The general purpose of this study is to examine the attitudes of people regarding social issues in America today. You will now be presented with an infographic:
\begin{quotation}
\begin{center}
\includegraphics[width=0.2\textwidth]{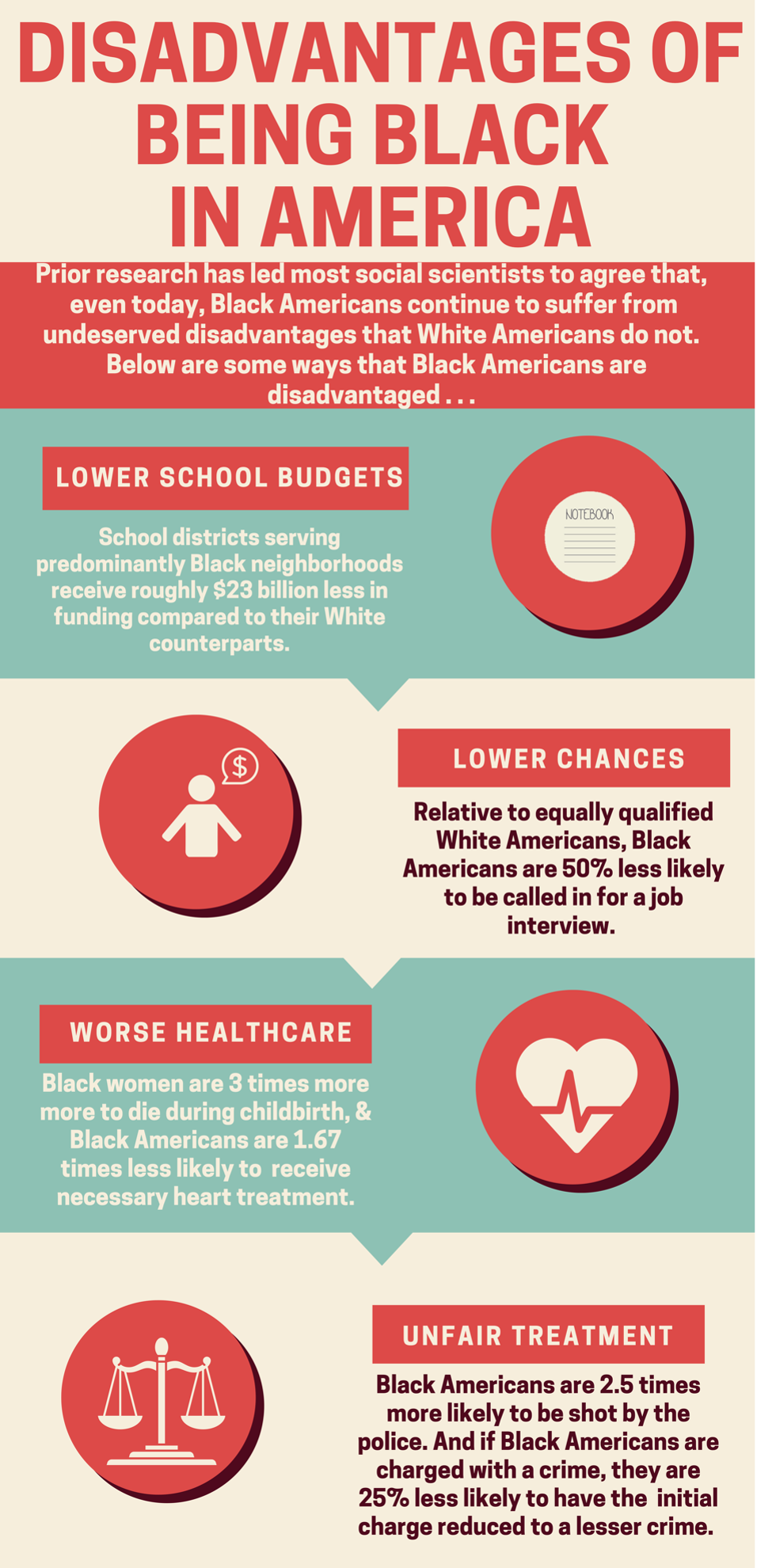}
\end{center}
\end{quotation}

\textbf{Control condition:}
The general purpose of this study is to examine the attitudes of people regarding social issues in America today. You will now be presented with an infographic:
\begin{quotation}
\begin{center}
\includegraphics[width=0.2\textwidth]{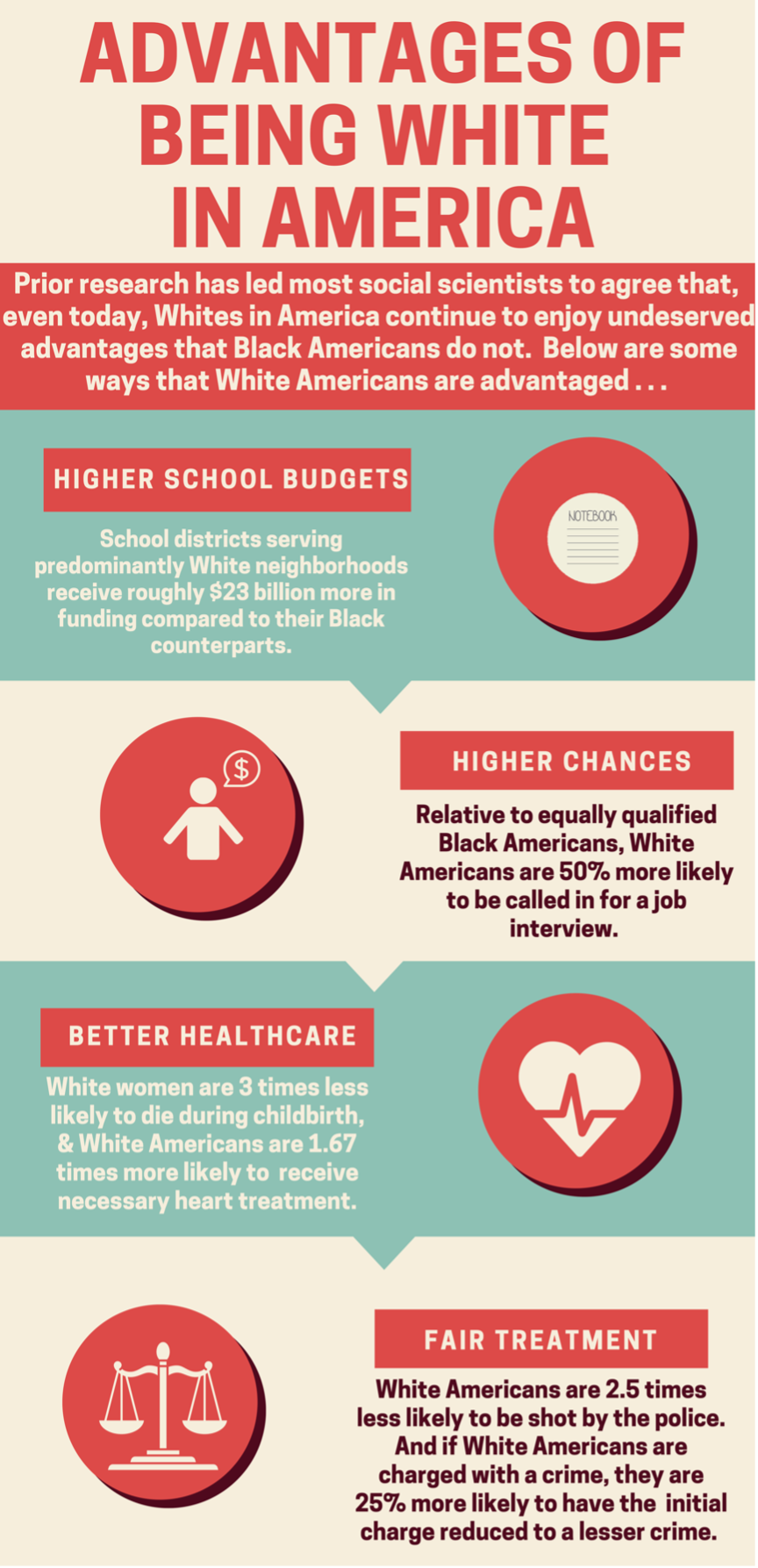}
\end{center}
\end{quotation}

\textbf{Question:}
\begin{quotation}
Rate the extent to which you agree with the following statement from 1 (STRONGLY DISAGREE) to 7 (STRONGLY AGREE): ``There should be large scale criminal justice reform to address racial inequalities in the justice system.'' Your answer must be in JSON format with a single key-value pair.
\end{quotation}
\end{tcolorbox}

\subsubsection{Introducing variability in multi-prompt experiments}
\label{apx:multiprompt}

The user prompts described in the previous section include a final question or instruction sampled from a predefined pool to introduce variability in the multi-prompt settings. Below are some examples of such instructions:

\begin{itemize}
    \item ``Consider all relevant factors and place this on the scale.''
    \item ``Reflect on the scenario and use your reasoning to assign a value.''
    \item ``From your understanding of the situation, quantify this feeling.''
    \item ``Given your insights and the context described, provide your evaluation.''
    \item ``With the provided details in mind, rate your feeling on the scale.''
    \item ``Consider all the information and your perspective to choose a suitable score.''
    \item ``Evaluate the feeling here and align a number with your reasoning.''
    \item ``Use the scale provided and your judgment to determine your feeling.''
    \item ``Judge this scenario thoughtfully, considering the context and the details shared.''
    \item ``Reflect on the key aspects provided and numerically assess your feeling.''
\end{itemize}

\end{document}